\newcommand{\bm}[1]{\mathbf{#1}}
\newcommand{\bx}{\bm{x}}
\newcommand{\bX}{\mathbf{X}}
\newcommand{\by}{\bm{y}}
\newcommand{\croppdf}[1]{\IfFileExists{#1-crop.pdf}{\immediate\write18{pdfcrop #1.pdf}}{\immediate\write18{pdfcrop #1.pdf}}}
\newtheorem{theorem}{Theorem}
\newtheorem{lemma}{Lemma}
\newtheorem{proposition}[theorem]{Proposition}
\definecolor{myblue}{RGB}{50, 116, 161}
\definecolor{myorange}{RGB}{225, 129, 44}
\definecolor{mygray}{RGB}{128, 128, 128}
\DeclareMathOperator*{\argmin}{arg\,min}
\newenvironment{highlight}{}{}
\newcommand{\update}[1]{#1}
\newcommand{\remove}[1]{}
\newcommand{\removetwo}[1]{}
\newcommand{\removecref}[1]{}
\newcommand{\removecreftwo}[1]{}
\newcommand{\notetwo}[1]{}
\newcommand{\note}[1]{}
\title{Bayesian Optimization over Bounded Domains with the Beta Product Kernel}
\author[1]{\href{mailto:huy.nguyen@oulu.fi}{Huy~Hoang~Nguyen}{}}
\author[2]{Han~Zhou}
\author[2]{Matthew~B.~Blaschko$^{*}$}
\author[1]{Aleksei~Tiulpin\thanks{Equal last author}}
\affil[1]{%
    Research Unit of Health Sciences and Technology, Finland\\
    University of Oulu\\
    Oulu, Finland
}
\affil[2]{%
    Department ESAT, Center for Processing Speech and Images\\
    KU Leuven\\
    Leuven, Belgium
}
\begin{document}
\maketitle

\begin{abstract}
  Bayesian optimization with Gaussian processes (GP) is commonly used to optimize black-box functions. The Matérn and the Radial Basis Function (RBF) covariance functions are used frequently, but they do not make any assumptions about the domain of the function, which may limit their applicability in bounded domains. To address the limitation, we introduce the Beta kernel, a non-stationary kernel induced by a product of Beta distribution density functions. Such a formulation allows our kernel to naturally model functions on bounded domains. 
  We present statistical evidence supporting the hypothesis that the kernel exhibits an exponential eigendecay rate, based on empirical analyses of its spectral properties across different settings.
  Our experimental results demonstrate the robustness of the Beta kernel in modeling functions with optima located near the faces or vertices of the unit hypercube. The experiments show that our kernel consistently outperforms a wide range of kernels, including the well-known Matérn and RBF, in different problems, including synthetic function optimization and the compression of vision and language models. \update{Our implementation is available at~\url{https://github.com/imedslab/BetaKernel}.}
\end{abstract}

\section{INTRODUCTION}

Bayesian optimization (BO) is a well-founded approach for global optimization of noisy black-box functions, which are often expensive to evaluate. At its core, BO relies on a surrogate model to approximate the objective function and guide the search process efficiently. A Gaussian process (GP) is commonly used as this surrogate due to its flexibility, ability to quantify uncertainty, and capability to incorporate prior knowledge through covariance functions (also called kernels). 

\begin{figure}[t]
    \centering
    \croppdf{figures/matern_2.5_convar_3d}
    \croppdf{figures/rbf_1_convar_3d}
    \croppdf{figures/buck_0.1_convar_3d}
    \croppdf{figures/buck_1.5_convar_3d}    
    \subfloat[\small RBF kernel\label{fig:rbf_1d}]{
    \includegraphics[width=0.48\linewidth]{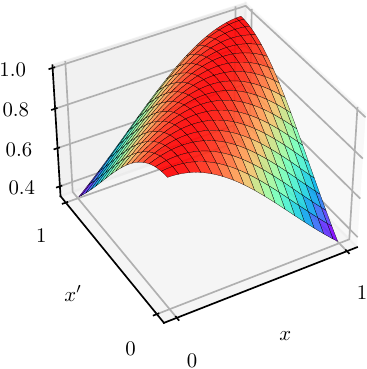}}
    \hfill
    \subfloat[\small Matérn kernel ($\nu=\frac{5}{2}$)\label{fig:matern_1d}]{
    \includegraphics[width=0.48\linewidth]{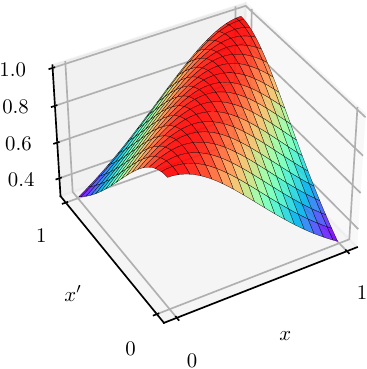}}
    \\
    \subfloat[\small Beta kernel ($h=0.1$)\label{fig:buc_1d_10}]{
    \includegraphics[width=0.48\linewidth]{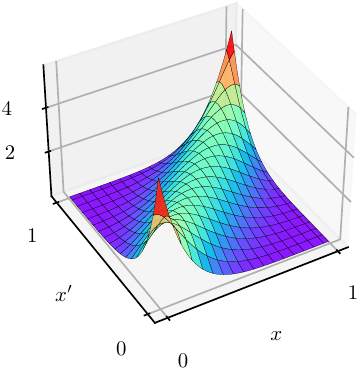}}
    \hfill
    \subfloat[\small Beta kernel ($h=1.5$)\label{fig:buc_1d_150}]{
    \includegraphics[width=0.48\linewidth]{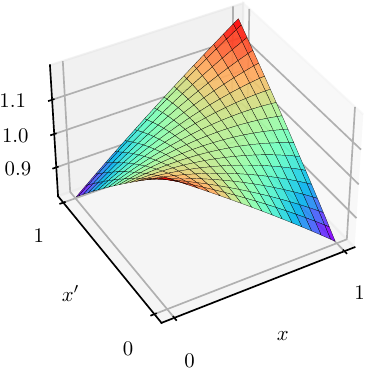}}
    \caption{\small Covariance matrices of the Matérn kernel and our Beta kernel in the unit 1D domain. Different from the Matérn kernel, the variation along the diagonal indicates the non-stationarity of our kernel.}
    \label{fig:1d_covar}
\end{figure}

A GP is defined by its mean and covariance function. The choice of kernel is critical in encoding prior knowledge about the target function's behavior. Among many available kernels~\citep{oh2018bock,wilson2013gaussian,duvenaud2011additive,jebara2004probability}, the Matérn and Radial Basis Function (RBF) kernels have been the most extensively studied~\citep{stein2012interpolation,williams2006gaussian,srinivas2009gaussian,santin2016approximation,vakili2021information} and are widely adopted in practical applications~\citep{pedregosa2011scikit,head2018scikit,gardner2018gpytorch}~\footnote{The Matérn kernel is the default choice in various GP libraries such as Scikit-optimize, GPyTorch, and GPyOpt.} due to their flexibility and smoothness properties. The RBF kernel, with its infinitely differentiable form, is ideal for modeling smooth functions, while the Matérn kernel provides adjustable smoothness through the $\nu$ parameter. Formally, the Matérn kernel is defined as~\citep{rasmussen2003gaussian,myers1994spatial} 
\begin{align} 
K_{\text{Matérn}}(r) = \frac{2^{1-\nu}}{\Gamma(\nu)} \left(\sqrt{2\nu} \frac{r}{\ell}\right)^{\nu} K_{\nu}\left(\sqrt{2\nu} \frac{r}{\ell}\right), 
\end{align}
where $r= \| \bm{x} - \bm{x}' \|_2$, $\nu > 0$ is a smoothness parameter, $\ell$ is a positive length scale, $\Gamma(\cdot)$ is the Gamma function, and $K_{\nu}$ is a modified Bessel function~\citep{abramowitz1968handbook}. When $\nu \rightarrow \infty$, the Matérn kernel is equivalent to the RBF kernel, formulated as
\begin{align}
    K_{\text{RBF}}(r) = \exp \left (-\frac{r^2}{2\ell^2} \right ).
\end{align}
RBF and Matérn are defined on unbounded domains. However, in most practical applications, the objective function is specified on bounded ones. The unawareness of the boundary of these kernels may result in either over-exploration (see Figure~\ref{fig:rbf_convergence}) or neglecting boundary regions (see Figure~\ref{fig:matern_convergence}).

Both the kernels, together with the Newton-Girard Additive (NGA)~\citep{duvenaud2011additive} and Spectral Mixture (SM)~\citep{wilson2013gaussian} kernels, are isotropic and stationary since they can be expressed as $ K(\mathbf{x}, \mathbf{x}') = 
K(\|\mathbf{x} - \mathbf{x}'\|), \forall \mathbf{x}, \mathbf{x}'$. However, it has been noted by~\cite{oh2018bock} that such stationary kernels face challenges with the boundary issue introduced by~\cite{swersky2017improving}, primarily due to their lack of awareness regarding absolute locations. To address this limitation, they proposed BOCK with a cylindrical (CYL) kernel defined in a hypersphere. Nevertheless, besides the CYL kernel, the high performance of BOCK is also attributed to the input wrapping proposed by~\cite{snoek2014input}, which is based on the cumulative density function (CDF) of the Beta distribution.

In this study, we propose a non-stationary Beta distribution-based kernel, thus the name \textbf{Beta kernel}, specifically defined on unit hypercubes. Our kernel is induced from products of multiple Beta distribution density functions, each of which naturally represents a wide range of functions defined on $[0,1]$. We empirically show that the eigenvalue decay rate of our kernel is exponential, which is similar to RBF's. 

We argue that the GP benchmarks are typically performed on synthetic test functions with optima located near the center of the search space~\citep{laguna2005experimental,molga2005test,back1996evolutionary,dixon1978global,pmlr-v235-hvarfner24a}. In a high-dimensional unit hypercube, while the boundary volume becomes significantly larger than the central volume, the volumes near the vertices are significantly small (see the example in Sec.~\ref{sc:unit_hypercube_ex}). Consequently, while locating optima near the boundary is more challenging than in the central region, it is unlikely to sample data points near the vertices. To evaluate this, we modify the test function domains so that the optima are positioned near a face or vertex of the unit hypercube. Our results demonstrate that the proposed kernel is more robust than a wide range of baselines -- including RBF and Matérn -- under these boundary-focused settings across different test functions.

In addition, we conduct experiments on model compression tasks for deep vision and language models, including Vision Transformer (ViT), Bidirectional encoder representations from transformer (BERT), Generative Pre-trained Transformer 2 (GPT-2), and Decoding-enhanced BERT with disentangled attention (DeBERTa). The results show that our kernel substantially outperforms the baselines across the compression tasks. Furthermore, our experiments also demonstrate that the Beta kernel consistently surpasses the Matérn kernel -- the most competitive baseline -- when combined with various acquisition functions.

\section{BETA KERNEL}

\begin{highlight}
\subsection{Preliminaries}
\end{highlight}

\paragraph{Gaussian Process.} A GP is defined as $f \sim \text{GP}(\mu(\bm{x}), K(\bm{x}, \bm{x}'))$, with its mean function $\mu(\bm{x}) = \mathbb{E}[f(\bm{x})]$, and covariance (kernel) function $K(\bm{x}, \bm{x}')=\mathbb{E}[(f(\bm{x}) - \mu(\bm{x}))(f(\bm{x}') - \mu(\bm{x}'))]$. Consider a setting in which we are given a set of observations $\bX_t=(\bx_1, \dots, \bx_t)$ and corresponding noisy outputs $\by_t = (y_1, \dots, y_t)^\top$, where $y_t=f(\bx_t) + \varepsilon_t$, and $\varepsilon_t \sim \mathcal{N}(0, \sigma^2)$ are i.i.d. Gaussian noise. Then, the posterior over $f$ conditioned on the observations is also a GP whose mean $\mu_t(\bx)$, covariance $K_t(\bx, \bx')$, and variance $\sigma^2_t(\bx)$ are formulated as
\begin{align}
    \mu_t(\bx) &= \mathbf{K}_t(\bx)^\top (\mathbf{K}_t^* + \sigma^2\mathbb{I})^{-1} \by_t, \nonumber \\
    K_t(\bx, \bx') &= K(\bx, \bx') - \mathbf{K}_t(\bx)^\top (\mathbf{K}_t^* + \sigma^2 \mathbb{I})^{-1} \mathbf{K}_t(\bx'), \nonumber \\
    \sigma^2_t(\bx) &= K_t(\bx, \bx),
    \label{eq:gp}
\end{align}
where $\mathbf{K}_t(\bx) = [K(\bx_1, \bx), \dots, K(\bx_t, \bx)]^\top$, and $\mathbf{K}_t^*=[K(\bx_i, \bx_j)]_{i,j=1}^t$ is the positive definite covariance matrix.

\paragraph{Beta Distribution.} The probability density function of the Beta distribution $\text{Beta}(\alpha,\beta)$ is defined on $[0,1]$ as
\begin{align}
    \update{\frac{1}{B(\alpha, \beta)}} x^{\alpha-1}(1-x)^{\beta-1},
\end{align}
where $x \in [0,1]$, $\alpha, \beta > 0$ \begin{highlight} and 
\begin{equation}
\begin{split}
    B(\alpha, \beta) = \int_0^1 s^{\alpha- 1 }(1-s)^{\beta - 1}ds =\frac{\Gamma(\alpha)\Gamma(\beta)}{\Gamma(\alpha + \beta)},
\end{split}
\label{eq:beta_func}
\end{equation}
\end{highlight}
\update{is the beta function with $\Gamma(\cdot)$ denoting the gamma function.} In this work, we are interested in the case where $\alpha, \beta > 1$, which allows the mode to be well-defined as $\frac{\alpha - 1}{\alpha + \beta - 2}$.

\begin{highlight}
\paragraph{Probability product kernels.} Let $p$ and $p'$ denote probability distributions on a space $\mathcal{S}$, and let $\rho$ be a positive constant.  
Assuming that $p^\rho, {p'}^\rho \in L^2(\mathcal{S})$, the probability product kernel between $p$ and $p'$ is defined by~\cite{jebara2004probability} as
\begin{align}
    K^{\text{prob}}(p, p') = \int_\mathcal{S} p(s)^\rho p'(s)^{\rho} ds.
\end{align}
When $\rho=1$, the kernel simplifies to the expectation of one distribution with respect to the other. In this case, it becomes equivalent to a kernel defined over two corresponding samples $\bx$ and $\bx'$ drawn from the distributions $p$ and $p'$, respectively:
\begin{align}
    K(\bx, \bx')= \int p(s) p'(s) ds = \mathbb{E}_p[p'(s)] = \mathbb{E}_{p'}[p(s)].
\end{align}
\end{highlight}
\begin{highlight}
\subsection{Kernel Derivation}

Let us first consider the one-dimensional case. Specifically, $\forall x, x' \in [0,1]$, the Beta product kernel (or simply, \emph{Beta kernel}), is constructed as a probability product kernel~\citep{jebara2004probability} with respect to the Beta distribution as follows
\begin{align}
    K_{\beta}(x, x')&=K_{\beta}((\alpha, \beta), (\alpha', \beta')) \nonumber \\
    &=C\int_0^1s^{\alpha -1}(1-s)^{\beta -1}s^{\alpha' -1}(1-s)^{\beta' -1}\mathrm{d}s \nonumber\\
    &=C\int_0^1s^{\alpha + \alpha'-2}(1-s)^{\beta + \beta' -2}\mathrm{d}s, \nonumber
\end{align}
where $x$ and $x'$ represent the modes of the two Beta distributions, respectively, and the normalization term $C = \frac {1}{\mathrm {B} (\alpha ,\beta)} \cdot \frac {1}{\mathrm {B} (\alpha' ,\beta' )}$. The shape parameters $\alpha$ and $\beta$ are connected to the modes via a common smoothing parameter $h$, such that $\alpha= 1 + \frac{x}{h}$ and $\beta = 1 + \frac{1 - x}{h}$. To ensure the unique existence of the modes $x$ and $x'$, we require that $\alpha, \beta > 1$, which implies $h > 0$.

Based on the definition of the Beta function in Eq.~\eqref{eq:beta_func} and given that $\alpha+\beta = \alpha' + \beta'=2 +\frac{1}{h}$, it follows that
\begin{align}
    &K_\beta(x, x') = \frac {\mathrm {B} (\alpha + \alpha' -1, \beta + \beta' -1)}{\mathrm {B} (\alpha ,\beta )\mathrm {B} (\alpha' ,\beta')} \nonumber \\
    &= \frac{\Gamma(\alpha + \beta)\Gamma(\alpha' + \beta')\Gamma(\alpha + \alpha' -1 )\Gamma(\beta + \beta' -1)}{\Gamma(\alpha)\Gamma(\beta)\Gamma(\alpha')\Gamma(\beta')\Gamma(\alpha + \alpha' + \beta + \beta' -2)} \nonumber \\
    &= \frac{\Gamma^2(\frac{1}{h} + 2)}{\Gamma(\frac{2}{h} + 2)} \frac{\Gamma(\alpha + \alpha' -1 )\Gamma(\beta + \beta' -1)}{\Gamma(\alpha)\Gamma(\beta)\Gamma(\alpha')\Gamma(\beta')}. \nonumber
\end{align}
By assuming that all dimensions are independent of each other, the definition of the Beta kernel in the $d$-dimensional space can be easily extended as the product over $d$ dimensions. Specifically, $\forall \bx, \bx' \in [0,1]^d$, we define the Beta kernel as
\begin{align}
     K_{\beta}(\bm{x},\bm{x}') = \Tilde{C} \prod_{i=1}^d\frac{\Gamma(\alpha_i + \alpha'_i - 1)\Gamma(\beta_i + \beta'_i - 1)}{\Gamma(\alpha_i)\Gamma(\beta_i)\Gamma(\alpha'_i)\Gamma(\beta'_i)},
     \label{eq:beta_kernel}
\end{align}
where
\begin{align}
     \Tilde{C} = \prod_{i=1}^d\frac{\Gamma^2 \left (\frac{1}{h_i} +2 \right ) }{\Gamma \left (\frac{2}{h_i} + 2 \right )}.
\end{align}
\end{highlight}



\update{\subsection{Properties}}
\begin{highlight}
\paragraph{Validity of Beta product kernel}
To ensure that the Beta product kernel is a valid kernel function, it must be positive semidefinite. This property guarantees that the resulting covariance matrix is symmetric and all its eigenvalues are non-negative, which is essential for its use in kernel-based methods such as support vector machines and Gaussian processes. Fortunately, this property follows from a more general result concerning probability product kernels. In particular, Theorem~\ref{eq:theorem_ppk_psd} guarantees the validity of any kernel constructed as a probability product kernel, including the Beta product kernel.

\begin{theorem}
    $K(\bx, \bx') = \int_\mathcal{S} p(s \mid \bx)^\rho p(s\mid \bx')^\rho \mathrm{d}s$ is positive semidefinite for \(\rho > 0\).
\begin{proof}
    We follow the definition of a Mercer kernel and consider all possible $\bm{x}_1, \dots, \bm{x}_m$ and real numbers $c_1, \dots, c_m$. To show that $K(\bx, \bx')$ is a valid kernel, we need to prove that $\sum_{i=1}^m \sum_{j=1}^{m} c_i c_j K(\bm{x}_i, \bm{x}_j) \geq 0$.
    We can see that
    \begin{align}
        &\sum_{i=1}^m \sum_{j=1}^{m} c_i c_j K(\bm{x}_i, \bm{x}_j) \\
        &= \sum_{i=1}^m \sum_{j=1}^{m} c_i c_j \int_\mathcal{S} p(s \mid \bx_i)^\rho p(s\mid \bx_j)^\rho \mathrm{d}s \\
        &= \int_\mathcal{S}  \left \{ \sum_{i=1}^m \sum_{j=1}^m c_i c_j p(s \mid \bx_i)^\rho p(s \mid \bx_j)^\rho \right \} ds.
    \end{align}    
    Now, we focus on the inner sums under the integral, and express that 
    \begin{align}
        &\sum_{i=1}^m \sum_{j=1}^m c_i c_j p(s \mid \bx_i)^\rho p(s \mid \bx_j)^\rho \\
        &=\sum_{i=1}^m [c_i p(s \mid \bx_i)^\rho]^2 + 2\sum_{i\neq j}^m  [ c_i p(s \mid \bx_i)^\rho] [c_j p(s\mid \bx_j)^\rho ] \label{eq:ppk1rho}
        \\ &= \left [ \sum_{i=1}^m c_i p(s \mid \bx_i)^\rho \right ]^2 \geq 0. \label{eq:ppk2rho}
    \end{align}
    The last equation \eqref{eq:ppk1rho}--\eqref{eq:ppk2rho} holds due to the identity $\left ( \sum_{i=1}^m a_i \right )^2 = \sum_{i=1}^m a_i^2 + 2 \sum_{i \neq j}^m a_i a_j$, with \(a_i = c_i p(s \mid \bx_i)^\rho\). Since the integrand is non-negative for all \(s\), the entire integral is non-negative, which completes the proof.
\end{proof}
\label{eq:theorem_ppk_psd}
\end{theorem}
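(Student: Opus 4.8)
The plan is to recognize the integral as an inner product in the Hilbert space $L^2(\mathcal{S})$ and then invoke the elementary fact that every Gram matrix is positive semidefinite. Concretely, I would introduce the feature map $\Phi : \bx \mapsto p(\cdot \mid \bx)^\rho$, which sends each input to a genuine element of $L^2(\mathcal{S})$ precisely under the standing assumption $p(\cdot \mid \bx)^\rho \in L^2(\mathcal{S})$. With this notation the kernel is exactly $K(\bx, \bx') = \langle \Phi(\bx), \Phi(\bx') \rangle_{L^2}$, so positive semidefiniteness is immediate from the inner-product structure, since a Gram matrix of vectors in an inner-product space always has a non-negative quadratic form.

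To keep the argument self-contained without appealing to abstract RKHS machinery, I would carry out the Mercer-style verification directly. Fix arbitrary points $\bx_1, \dots, \bx_m$ and coefficients $c_1, \dots, c_m \in \sR$, and write
\begin{align}
\sum_{i=1}^m \sum_{j=1}^m c_i c_j K(\bx_i, \bx_j) = \int_{\mathcal{S}} \sum_{i=1}^m \sum_{j=1}^m c_i c_j \, p(s \mid \bx_i)^\rho p(s \mid \bx_j)^\rho \, \mathrm{d}s,
\end{align}
where pulling the finite double sum inside the integral is justified by linearity. The integrand factorizes as the perfect square $\left( \sum_{i=1}^m c_i \, p(s \mid \bx_i)^\rho \right)^2$, which is pointwise non-negative for every $s \in \mathcal{S}$; hence the integral, and thus the entire quadratic form, is non-negative, which is exactly the required condition.

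The only point requiring care, rather than a genuine obstacle, is ensuring the objects are well-defined: the square-integrability hypothesis $p(\cdot \mid \bx)^\rho \in L^2(\mathcal{S})$ guarantees via Cauchy--Schwarz that each product $p(s \mid \bx_i)^\rho p(s \mid \bx_j)^\rho$ is integrable, so every entry $K(\bx_i, \bx_j)$ is finite and the interchange above is legitimate. Note that nothing in this argument is special to the Beta distribution or to the constant factors; it holds for any density and any $\rho > 0$, which is exactly why it applies verbatim to the Beta product kernel of Eq.~\eqref{eq:beta_kernel} once that kernel is expressed back in its defining integral form.
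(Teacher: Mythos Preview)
Your proposal is correct and follows essentially the same route as the paper: both interchange the finite double sum with the integral and then recognize the integrand as the perfect square $\left(\sum_{i=1}^m c_i\, p(s\mid\bx_i)^\rho\right)^2$. Your additional framing via the feature map $\Phi:\bx\mapsto p(\cdot\mid\bx)^\rho$ into $L^2(\mathcal{S})$ and the Cauchy--Schwarz remark on integrability are nice touches that the paper omits, but they do not change the underlying argument.
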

\end{highlight}


\paragraph{Non-stationarity.} Due to the formulation in~\eqref{eq:beta_kernel}, the proposed kernel $K_{\beta}(\bm{x},\bm{x}')$ cannot be expressed solely as a function of $\bx - \bx'$, which indicates its non-stationarity. To illustrate this, we compare the Beta kernel to the Matérn kernel on the unit 1D domain in Figure~\ref{fig:1d_covar}. Whereas RBF and Matérn have constant diagonals, our Beta kernel's diagonal varies depending on $h$. 

\paragraph{Diagonal.} The diagonal of the covariance matrix w.r.t. the Beta kernel is characterized as
\begin{align}
   K_{\beta}(\bx, \bx) = \Tilde{C} \prod_{i=1}^d \frac{\Gamma(2\frac{x_i}{h_i}+1) \Gamma(2\frac{1-x_i}{h_i}+1)}{\Gamma^2(\frac{x_i}{h_i}+1)\Gamma^2(\frac{1-x_i}{h_i} + 1)}.
\end{align}

We derive an upper bound of $K_{\beta}(\bx, \bx)$ in Proposition~\ref{eq:k_bound}.
\begin{proposition} 
\label{eq:k_bound}
Assume that $h_i = h \ \forall i \in [d]$, $\forall \bx \in [0,1]^d$, we can bound $K_{\beta}(\bx, \bx)$ as follows
\begin{align}
    K_\beta(\bx, \bx) \leq 2^{3d-\frac{2d}{h}} \left (\frac{1}{h}+1 \right )^d \left (\frac{1}{h\pi}+\frac{3}{2\pi} \right )^\frac{d}{2}.
\end{align}
\begin{proof}
    The full proof is provided in Appendix~\ref{sec:proof_prop1}.
\end{proof}
\end{proposition}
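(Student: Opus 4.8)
The plan is to exploit the product structure. With $h_i=h$ for all $i$, both the diagonal $K_\beta(\bx,\bx)$ and the constant $\tilde C$ factor over coordinates, so it suffices to establish the one–dimensional inequality
\[
\frac{\Gamma^2(\tfrac1h+2)}{\Gamma(\tfrac2h+2)}\cdot\frac{\Gamma(\tfrac{2x}{h}+1)\,\Gamma(\tfrac{2(1-x)}{h}+1)}{\Gamma^2(\tfrac{x}{h}+1)\,\Gamma^2(\tfrac{1-x}{h}+1)}\ \le\ 2^{\,3-\frac2h}\Big(\tfrac1h+1\Big)\Big(\tfrac1{h\pi}+\tfrac3{2\pi}\Big)^{\!1/2}
\]
for every $x\in[0,1]$, and then raise both sides to the $d$-th power. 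Abbreviating $a=\tfrac{x}{h}$, $b=\tfrac{1-x}{h}$ (so $a+b=\tfrac1h$ and $a,b\ge 0$) turns the left-hand side into a single function on the segment $a+b=\tfrac1h$, and the target into a bound that is uniform in the mode $x$.

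The key tool is the Legendre duplication formula $\Gamma(2z)=2^{2z-1}\pi^{-1/2}\Gamma(z)\Gamma(z+\tfrac12)$ applied to the three doubled arguments. Applied to $\Gamma(\tfrac2h+2)=\Gamma\!\big(2(\tfrac1h+1)\big)$ in the denominator of $\tilde C$, it produces a factor $2^{-(2/h+1)}$ — the source of the $2^{-2/h}$ in the target — along with $\Gamma(\tfrac1h+1)\Gamma(\tfrac1h+\tfrac32)$ and a $\sqrt\pi$. Combined with the elementary identity $\tfrac{\Gamma(1/h+2)}{\Gamma(1/h+1)}=\tfrac1h+1$, this already yields the linear factor, while a Gautschi/Stirling estimate of the residual ratio $\tfrac{\Gamma(1/h+1)}{\Gamma(1/h+3/2)}\lesssim(\tfrac1h+\tfrac32)^{-1/2}$ produces the square-root factor $\big(\tfrac{2/h+3}{2\pi}\big)^{1/2}=\big(\tfrac1{h\pi}+\tfrac3{2\pi}\big)^{1/2}$, with the $\pi$ inherited from duplication. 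The mode-dependent diagonal factor I would then bound by its maximum over the segment $a+b=\tfrac1h$; since $\log\Gamma$ is convex, this maximum is attained at a vertex $x\in\{0,1\}$, where one argument degenerates and the corresponding half-shifted ratio reduces to the exact value $\Gamma(\tfrac12)/\Gamma(1)=\sqrt\pi$.

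The step I expect to be the main obstacle is the bookkeeping of the powers of two. A fully tight accounting would also expand the diagonal numerator $\Gamma(2a+1)\Gamma(2b+1)$ by duplication, which reintroduces a factor $2^{2a+2b}=2^{2/h}$ that \emph{exactly cancels} the $2^{-2/h}$ coming from $\tilde C$; the diagonal therefore genuinely grows like a central-binomial term at the vertex. Consequently, arriving at the stated prefactor $2^{\,3-2/h}$ hinges on bounding the diagonal numerator at its endpoint value \emph{without} re-expanding it, and on verifying that the leftover duplication constants and the crude mode-maximization are absorbed into the $2^{3}$ — this is precisely where the powers of two must be tracked with care, since over- or under-counting the vertex growth of the diagonal is what determines whether a net $2^{-2/h}$ legitimately survives. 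The secondary difficulty is ensuring the half-shifted gamma ratios are bounded uniformly up to the degenerate endpoints $x\in\{0,1\}$, where Gautschi's inequality (valid only for strictly positive arguments) must be supplemented by the exact value $\Gamma(\tfrac12)=\sqrt\pi$.
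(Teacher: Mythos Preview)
Your approach is the paper's: reduce to one dimension via the product structure, rewrite everything with Legendre duplication, and control the resulting half-shifted gamma ratios by Wendel/Gautschi. The paper packages these as two lemmas---duplication in the form $\tfrac{\Gamma(2z+1)}{\Gamma^2(z+1)}=\tfrac{2^{2z}\Gamma(z+\tfrac12)}{\sqrt\pi\,\Gamma(z+1)}$, and the two-sided Wendel bound $\bigl(\tfrac{2}{2z+1}\bigr)^{1/2}\le\tfrac{\Gamma(z+1/2)}{\Gamma(z+1)}\le 2$ for $z\ge 0$---then crudely bounds the product of the two mode-dependent ratios by $4$ and treats $\tilde C$ exactly as you outline.

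Your suspicion about the powers of two is correct, and it is precisely where the paper's proof goes wrong. Applying duplication at $a=x/h$ and $b=(1-x)/h$ produces the exponent $2a+2b=2/h$, so the diagonal factor carries $2^{2/h}$, which cancels the $2^{-2/h}$ coming from $\tilde C$. The paper instead writes the exponent as $2\tfrac{x_i}{h_i}+2\bigl(1-\tfrac{x_i}{h_i}\bigr)=2$, silently replacing $\tfrac{1-x_i}{h_i}$ by $1-\tfrac{x_i}{h_i}$; this slip is what lets the spurious $2^{-2d/h}$ survive to the final line. The stated inequality is in fact false for small $h$: at $d=1$, $h=0.1$, $x=0$ one has $K_\beta(0,0)=\tfrac{(11!)^2}{21!}\binom{20}{10}\approx 5.76$, whereas the claimed bound is $2^{-17}\cdot 11\cdot\sqrt{11.5/\pi}\approx 1.6\times 10^{-4}$. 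Carrying the same argument through with the exponent fixed yields the valid bound $K_\beta(\bx,\bx)\le 2^{d}\bigl(\tfrac1h+1\bigr)^d\bigl(\tfrac1{h\pi}+\tfrac{3}{2\pi}\bigr)^{d/2}$, with the $2^{2/h}$ and $2^{-2/h}$ cancelled exactly as you anticipated---so no amount of bookkeeping will recover the stated prefactor, and you should not try to match it. Your secondary worry about the endpoints is not an obstacle: the Wendel upper bound the paper uses is stated for $z\ge 0$ and holds there (at $z=0$ the ratio equals $\sqrt\pi<2$), so no separate endpoint argument is needed.
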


\subsection{Numerical Analyses of Eigenvalue Decay}

\begin{figure}[h]
    \centering
    \croppdf{figures/eigenvalue_decay}
    \includegraphics[width=0.9\linewidth]{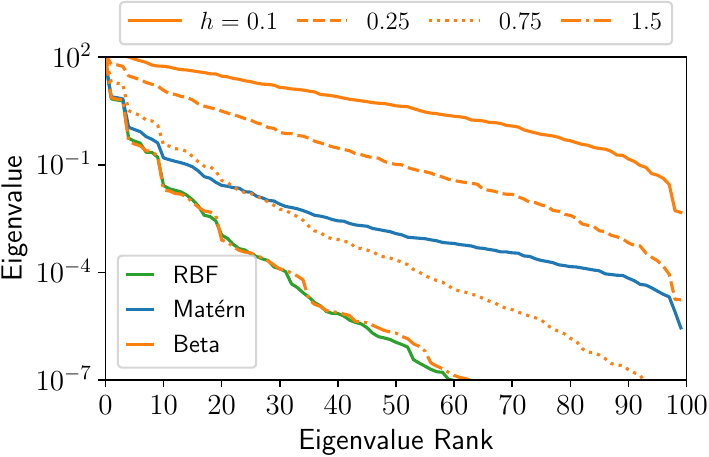}
    \caption{\small Spectral decay for the RBF ($\ell=1$), Matérn ($\nu=2.5$), and Beta kernels on 3D unit hypercube.}
    \label{fig:eigenvalue_decay}
\end{figure}

In this section, we present numerical analyses to examine the eigenvalue decay rate of the Beta kernel, which is associated with the smoothness of the functions that the GP can model. In addition, the eigendecay rate is linked to the regret bound of the kernel~\citep{srinivas2009gaussian,vakili2021information}. Our analysis is conducted with reference to the well-established eigendecay rates of the Matérn and RBF kernels, which are $O(m^{-\frac{2\nu + d}{d}})$ and $O(\exp(-m^{1/d}))$, respectively~\citep{santin2016approximation,belkin2018approximation}. 

We compute the expected spectrum decay using the following procedure: 
\begin{enumerate}
    \item We generated $300$ random data matrices $X_i$ of size $100\times d$ .
    \item For every $X_i$, we generated the corresponding kernel matrix $K_i \in \mathbb{R}^{d\times d}$ by computing $k_\ell(\cdot, \cdot)$ between all the rows of $X_i$.
    \item We computed the sorted sets of eigenvalues for all $K_i$ and then averaged them, resulting in the corresponding expected spectrum for the most important kernels, several hyperparameter settings, and problem dimensions.
\end{enumerate}

In Figure~\ref{fig:eigenvalue_decay}, we analyze the spectral decay of the RBF, Matérn, and Beta kernels on the 3D unit hypercube. Accordingly, there is a strong correlation between the bandwidth parameter $h$ and the eigenvalue decay rate of the Beta kernel. With $h \leq 0.25$, the Beta kernel shows a slower eigenvalue decay rate compared to the Matérn kernel with $\nu=2.5$. Additionally, in the logarithmic scale of Figure~\ref{fig:eigenvalue_decay}, the eigendecay of the Beta kernel appears approximately linear, which suggests a potential exponential decay rate. When $h=1.5$, its eigendecay closely matches the exponential rate of the RBF kernel.

We further conduct statistical analyses to assess whether the eigendecay of the Beta kernel follows an exponential trend. Specifically, we consider various settings with $d \in \{5, 10, 20, 50\}$ and bandwidth $h \in \{0.1, 0.25, 0.5, 0.75, 1, 1.5\}$. After computing the expected spectrum for each setting, we fit a linear regression model to examine the relationship between $\log \lambda_j$ and $j$, where $\lambda_j$ denotes the eigenvalue and $j$ is its index. If the eigendecay is exponential, this relationship should be statistically significant. As shown in Table~\ref{tab:stats_test_eigendecay}, the p-values across various settings are significantly low, providing strong evidence in support of our hypothesis.

\begin{table}[t]
    \centering
    \caption{\small P-values from the statistical analysis of the exponential eigendecay rate of the proposed kernel across different dimensions and bandwidths.}    
    \renewcommand{\arraystretch}{1.3}
    \footnotesize
    \resizebox{0.48\textwidth}{!}{%
    \begin{tabular}{ccccc}
        \toprule
        $h$ & \textbf{$d=5$} & \textbf{$d=10$} & \textbf{$d=20$} & \textbf{$d=50$} \\
        \midrule
        0.1  & 7.7$\times 10^{-36}$ & 1.4$\times 10^{-31}$ & 3.6$\times 10^{-32}$ & 1.0$\times 10^{-32}$ \\
        0.25 & 1.9$\times 10^{-43}$ & 5.6$\times 10^{-36}$ & 2.4$\times 10^{-33}$ & 5.6$\times 10^{-34}$ \\
        0.5  & 1.3$\times 10^{-40}$ & 1.1$\times 10^{-31}$ & 1.0$\times 10^{-33}$ & 9.7$\times 10^{-35}$ \\
        0.75 & 2.3$\times 10^{-37}$ & 9.7$\times 10^{-28}$ & 1.1$\times 10^{-25}$ & 5.0$\times 10^{-35}$ \\
        1    & 2.5$\times 10^{-35}$ & 2.9$\times 10^{-26}$ & 1.5$\times 10^{-21}$ & 2.4$\times 10^{-33}$ \\
        1.5  & 7.1$\times 10^{-33}$ & 5.0$\times 10^{-25}$ & 1.4$\times 10^{-19}$ & 2.3$\times 10^{-17}$ \\
        \bottomrule
    \end{tabular}}
    \label{tab:stats_test_eigendecay}
\end{table}

\section{EXPERIMENTS}
Our main focus of this section is to demonstrate fair and direct comparisons between the proposed Beta kernel and two widely-used stationary kernels -- RBF and Matérn -- in BO using GP. For that purpose, we conducted our experiments on both synthetic data and real-world vision and natural language data. Additionally, we investigated the compatibility of the mentioned kernels with a wide range of acquisition functions such as UCB, PI~\citep{kushner1964new}, EI~\citep{jones1998efficient}, corrected PI (PI\_C)~\citep{ma2019bayesian}, and corrected EI (EI\_C)~\citep{zhou2024corrected}. Furthermore, we intuitively illustrated behavioral differences of our kernel compared to RBF and Matérn.

\subsection{Experimental Setup}

\begin{table}[t]
    \centering
    \caption{\small Models and datasets for the compression task.}
    \footnotesize
    \renewcommand{\arraystretch}{1.2}
    \setlength{\tabcolsep}{2.5pt}
    \resizebox{0.48\textwidth}{!}{
    \begin{tabular}{lllrr}
        \toprule
        \textbf{Dataset} & \textbf{Task Description} & \textbf{Model} & \textbf{Params} & \textbf{$d$} \\
        \midrule
        ImageNet & Visual object classification & ViT   & 87M  & 72 \\
        SQuAD    & Question-answering           & GPT-2 & 124M & 48 \\
        SQuAD    & Question-answering           & BERT  & 109M & 72 \\
        MNLI/ RTE     & Natural language inference   & \multirow{4}{*}{DeBERTa} & \multirow{4}{*}{184M} & \multirow{4}{*}{14} \\
        QNLI     & Sentence pair classification &       &      &    \\
        MRPC     & Similarity and paraphrase    &       &      &    \\
        \bottomrule
    \end{tabular}
    }
    \label{tab:data_desc}
\end{table}

\begin{figure}[t]
    \centering
    \croppdf{figures/test_func/levy_2d}
    \subfloat[\small 2D Levy test function\label{fig:levy_2d_viz}]{\includegraphics[width=0.47\linewidth]{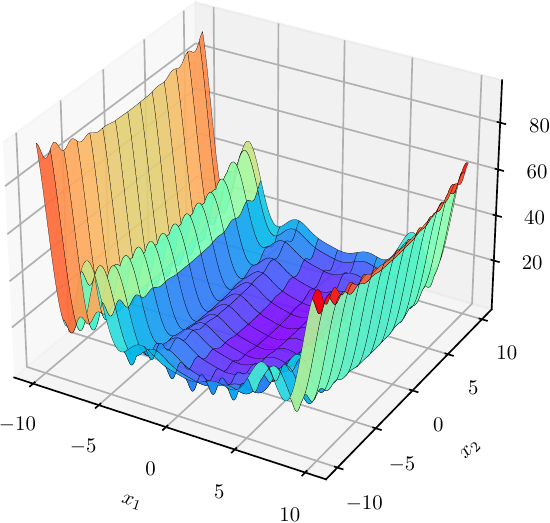}} \hfill
    \subfloat[\small Early trajectories\label{fig:levy_2d_trajectories}]{\includegraphics[width=0.52\linewidth]{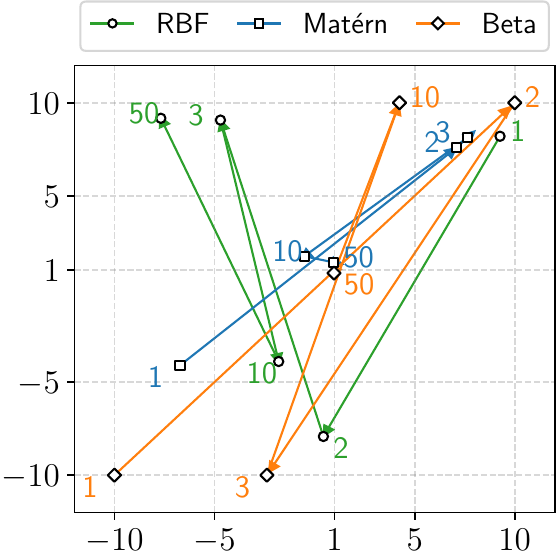}} \\
    \subfloat[\small RBF\label{fig:rbf_convergence}]{\includegraphics[width=0.32\linewidth]{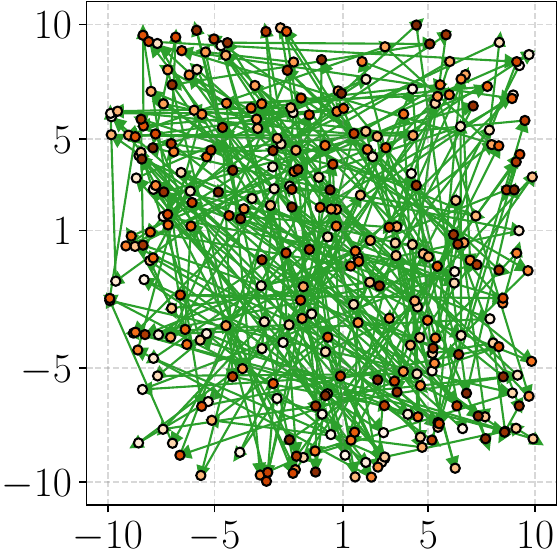}} \hfill
    \subfloat[\small Matérn\label{fig:matern_convergence}]{\includegraphics[width=0.32\linewidth]{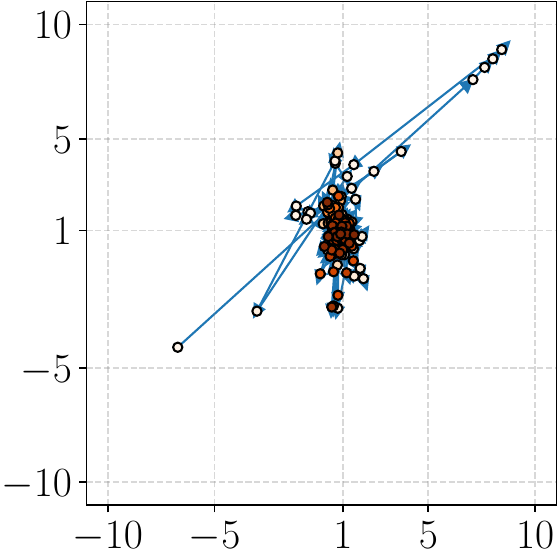}} \hfill
    \subfloat[\small Beta (ours)\label{fig:beta_convergence}]{\includegraphics[width=0.32\linewidth]{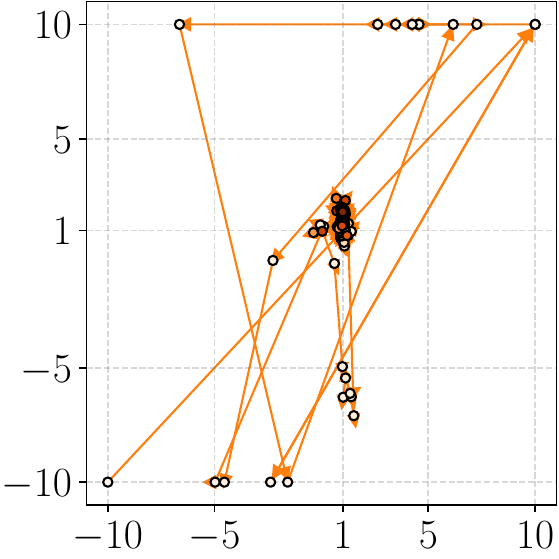}}
    \caption{\small Global optimization on the 2D Levy test function, where the global minimum is located at $(1,1)$. (b) The colored numbers represent optimization iterations corresponding to different kernel functions. (c-e) Convergence behavior over 300 iterations: (c) The RBF kernel shows over-exploration, (d) the Matérn kernel tends to neglect boundary regions, and (e) our Beta kernel achieves a more balanced trade-off between exploration and exploitation.}
    \label{fig:levy_2d}
\end{figure}

\begin{figure}[t]
    \centering
    \includegraphics[width=\linewidth]{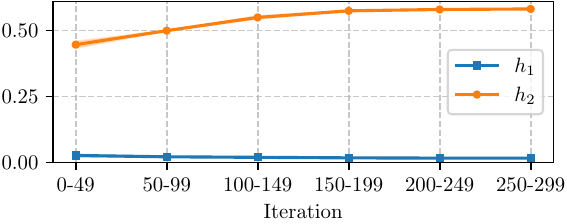}
    \caption{\small \update{Convergence of the smoothing parameters $h$ in the 2D Levy function optimization}}
    \label{fig:h_convergence}
\end{figure}

\begin{figure}[t]
    \centering    
    \subfloat[Convergence]{
    \includegraphics[width=0.98\linewidth]{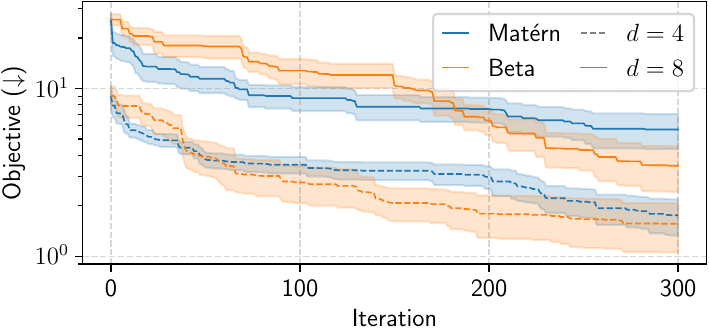}} \\
    \subfloat[Normalized distance to boundary ($d=8$)\label{fig:dist2boundary}]{
    \includegraphics[width=0.98\linewidth]{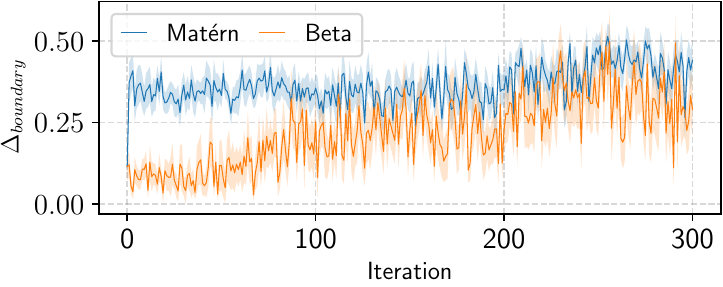}}
    \caption{\small Comparison between GP using the Matérn kernel ($\nu=2.5$), and our Beta kernel on the Levy test function.}
    \label{fig:comp_levy}
\end{figure}

\paragraph{Optimization Tasks.} For synthetic data, we performed the global optimization on the Levy test function with $d\in \{2, 4, 8\}$~\citep{laguna2005experimental}. For vision model compression, we aimed to compress the Vision Transformer (ViT) architecture~\citep{dosovitskiy2020image} on the 1k-ImageNet dataset~\citep{deng2009imagenet}. For language model compression, we performed compression on the BERT (large) model~\citep{devlin2018bert}, and the GPT-2 model~\citep{radford2019language} on the SQuAD v1 dataset~\citep{rajpurkar2016squad}. In addition, we compressed on DeBERTa-v3~\citep{he2020deberta} on four datasets -- namely MNLI~\citep{williams2017broad}, RTE~\citep{dagan2005pascal,bar2006second,giampiccolo2007third,bentivogli2009fifth}, QNLI~\citep{rajpurkar2016squad,wang2018glue}, and MRPC~\citep{dolan2005automatically} -- in the GLUE benchmark~\citep{wang2018glue}. The detailed description is presented in Table~\ref{tab:data_desc}.

\begin{table}[h]
    \centering
    \caption{\small Performance comparison across different settings ($d=20$). Setting 1: global solutions near the center. Setting 2: first global solution near the margin. Setting 3: first global solution near a vertex. \update{$\dagger$ denotes experiments performed with data warping technique from~\citep{snoek2014input}.}}
    \renewcommand{\arraystretch}{1.3}
    \resizebox{0.48\textwidth}{!}{
    \begin{tabular}{lccccc}
        \toprule
        \textbf{Kernel} & \textbf{Griewank} & \textbf{Ackley} & \textbf{Branin} & \textbf{Hartmann} & \textbf{Levy} \\
        \midrule
        \multicolumn{6}{c}{\textbf{Setting 1: Global optimum near the center}} \\
        \midrule
        RBF & 342.0±22.1 & 20.4±0.1 & 31.5±0.0 & -0.8±0.0 & 141.0±15.0 \\
        SM & 277.0±10.3 & 19.8±0.1 & 16.5±1.1 & -1.4±0.1 & 79.4±8.8 \\
        NGA  & 216.7±59.3 & 20.2±0.2 & 18.0±1.5 & -0.9±0.1 & 55.1±6.6 \\
        CYL  & 296.3±33.8 & 19.2±1.2 & 24.2±1.7 & -1.2±0.0 & 131.0±11.8 \\
        Matérn & \textbf{22.7±1.6} & \textbf{17.4±0.3} & \textbf{5.5±0.3} & \textbf{-2.4±0.1} & 10.6±1.4 \\
        Beta & 122.3±11.1 & 20.7±0.0 & 7.0±0.6 & -2.2±0.1 & \textbf{9.2±0.3} \\
        \midrule
        \multicolumn{6}{c}{\textbf{Setting 2: Global optimum near a face}} \\
        \midrule
        RBF & 326.5±14.9 & 20.3±0.1 & 22.3±2.0 & -1.1±0.0 & 104.9±8.0 \\
        SM & 279.2±7.7 & 19.5±0.2 & 18.3±1.1 & -1.4±0.1 & 81.6±5.6 \\
        NGA & 216.0±36.5 & 19.4±0.4 & 24.4±3.9 & -1.2±0.1 & 70.6±10.6 \\
        CYL & 275.9±28.0 & 20.3±0.1 & 26.2±3.0 & -1.1±0.1 & 101.8±6.7 \\
        Matérn & 24.7±2.3 & 17.3±0.3 & \textbf{4.1±0.3} & -1.6±0.0 & 11.2±1.7 \\
        Beta & \textbf{20.4±0.2} & \textbf{10.1±0.0} & 15.0±1.1 & \textbf{-2.0±0.1} & \textbf{2.2±0.1} \\
        \midrule
        \multicolumn{6}{c}{\textbf{Setting 3: Global optimum near a vertex}} \\
        \midrule
        RBF & 309.9±16.5 & 19.9±0.2 & 58.6±5.5 & -0.8±0.1 & 87.2±12.0 \\
        RBF$^{\dagger}$       & 289.5±27.7  & 19.9±0.1  & 52.1±6.2  & -0.8±0.1  & 83.2
        ±8.4 \\
        SM & 189.1±36.5 & 20.0±0.1 & 48.9±6.0 & -0.9±0.0 & 69.4±3.6 \\
        NGA & 112.3±34.7 & 19.6±0.3 & 8.5±1.7 & -1.2±0.1 & 68.4±9.6 \\
        CYL & 278.5±29.2 & 20.3±0.1 & 64.0±5.4 & - & 69.3±3.3 \\
        Matérn & 21.1±1.8 & 17.0±0.3 & 5.9±0.9 & -2.1±0.0 & 24.6±3.6 \\
        Matérn$^{\dagger}$     & 44.2±4.2    & 16.9±0.2  & 10.1±1.2  & -2.1±0.1  & 19.5±2.3 \\        
        Beta & \textbf{20.3±6.4} & \textbf{13.9±0.7} & \textbf{5.4±1.6} & \textbf{-2.2±0.1} & \textbf{6.9±0.8} \\
        \bottomrule
    \end{tabular}}
    \label{tab:merged_settings}
\end{table}

\paragraph{Implementation Details.} 
Our implementation was based on the following open-source libraries: PyTorch~\citep{paszke2019pytorch}, HuggingFace, BoTorch~\citep{balandat2020botorch}, and GPyTorch~\citep{gardner2018gpytorch}. The training processes were run on NVIDIA V100 GPUs. Each experiment was repeated $10$ times with different random seeds, then its mean and standard error were reported. The lengthscales of the RBF and Matérn kernels, along with the bandwidth of the Beta kernel, were learned through maximum marginal likelihood during training. 

\paragraph{Baselines.}
Besides the two well-known kernels RBF and Matérn, we compared our kernel to Spectral Mixture (SM)~\citep{wilson2013gaussian}, Newton-Girard Additive (NGA)~\citep{duvenaud2011additive}, and Cylindrical (CYL)~\citep{oh2018bock} kernels. 

\begin{figure}[t]
    \centering
    \croppdf{figures/objective_ImageNet_squad_comparison_UCB}        
    \includegraphics[width=\linewidth]{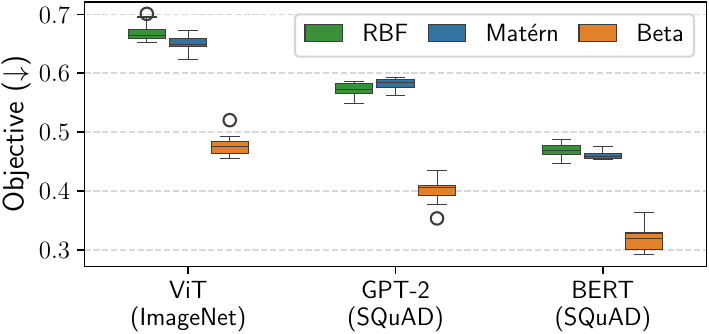}
    \caption{\small Model compression comparisons between GP using the RBF, Matérn ($\nu=2.5$), and Beta kernels on the ImageNet and SQuAD datasets. The common acquisition function is UCB.}
    \label{fig:compression_comp_imagenet_squad_ucb}
\end{figure}

\begin{table}[t]
\caption{\small Objective comparison of different kernels across the model compression tasks. \update{Each value represents the combined error rate and compression rate ($w=1$), as defined in Eq.~\eqref{eq:objective}.}}
\centering
\scriptsize
\renewcommand{\arraystretch}{1.5}
\setlength{\tabcolsep}{1.4pt}
\resizebox{0.49\textwidth}{!}{%
\begin{tabular}{lcccc}
\toprule
\multirow{2}{*}{\textbf{Kernel}} & \textbf{ViT} & \textbf{BERT} & \textbf{GPT-2} & \textbf{DeBERTa} \\
\cline{2-5}
 & ImageNet & SQuAD & SQuAD & MNLI \\
\midrule
RBF    & $0.671_{\pm 0.005}$          & $0.468_{\pm 0.005}$          & $0.571_{\pm 0.004}$          & $0.257_{\pm 0.006}$          \\
SM     & $0.665_{\pm 0.004}$          & $0.496_{\pm 0.005}$          & $0.560_{\pm 0.003}$          & $0.294_{\pm 0.004}$          \\
NGA    & $0.680_{\pm 0.002}$          & $0.483_{\pm 0.006}$          & $0.581_{\pm 0.001}$          & --                           \\
CYL    & $0.684_{\pm 0.008}$          & $0.515_{\pm 0.012}$          & $0.570_{\pm 0.005}$          & $0.304_{\pm 0.004}$          \\
Matérn & $0.651_{\pm 0.005}$          & $0.461_{\pm 0.002}$          & $0.582_{\pm 0.003}$          & $0.246_{\pm 0.008}$          \\
Beta   & $\mathbf{0.478_{\pm 0.006}}$ & $\mathbf{0.319_{\pm 0.007}}$ & $\mathbf{0.401_{\pm 0.008}}$ & $\mathbf{0.124_{\pm 0.001}}$ \\
\bottomrule
\end{tabular}}
\label{tab:kernel_comparison}
\end{table}

\begin{table}[t]
\centering
\caption{\small \update{Performance of different kernels in terms of FLOPs, F1 score, and accuracy. FLOPs saving is calculated as (original FLOPs - compressed FLOPs) / original FLOPs.}}
\subfloat[\update{FLOPs and F1 score comparison}\label{tab:flops_f1}]{
\footnotesize
\update{
    \begin{tabular}{lccc}
    \toprule
    \textbf{Kernel} & \textbf{FLOPs} & \textbf{FLOPs saving (\%)} & \textbf{F1 (\%)} \\
    \midrule
    Original & 1.5T       & 0    & \\
    RBF      & 638G  & 57.5 & 86.06 \\
    Matern   & 626G   & 58.3 & 86.47 \\
    Beta     & 349G  & 76.7 & 86.44 \\
    \bottomrule
    \end{tabular}
}}
\\
\subfloat[\update{FLOPs and accuracy comparison}\label{tab:flops_accuracy}]{
\update{
    \footnotesize
    \begin{tabular}{lccc}
    \toprule
    \textbf{Kernel} & \textbf{FLOPs} & \textbf{FLOPs saving (\%)} & \textbf{Accuracy (\%)} \\
    \midrule
    Original & 1.08T      & 0    & \\
    RBF      & 450G   & 58.4 & 74.26 \\
    Matern   & 444G   & 58.9 & 74.34 \\
    Beta     & 261G   & 75.8 & 74.29 \\
    \bottomrule
    \end{tabular}
}}
\label{tab:quantitative_results}
\end{table}

\subsection{Global Optimization on Synthetic Test Functions}

\paragraph{Behavioral Intuition on Levy Function.}
We utilized the GP-UCB algorithm for the minimization. We initially selected $3\cdot d$ data points using the Sobol's algorithm~\citep{sobol1967distribution,owen1998scrambling}, and performed the optimization in $300$ iterations.

We graphically depict the 2-dimensional Levy test function with the global minimum at $(1,1)$ in Figure~\ref{fig:levy_2d}. The convergence behaviors for different kernels are shown in Figure~\ref{fig:levy_2d_trajectories}. Due to the Beta kernel's characteristic as shown in Figure~\ref{fig:1d_covar}, our kernel prioritized exploring the two corner points $(-10, -10)$ and $(10, 10)$ as well as the boundaries of the domain. Detailed convergence trajectories are further illustrated in Figures~\ref{fig:rbf_convergence}, \ref{fig:matern_convergence}, and \ref{fig:beta_convergence}. Whereas the RBF kernel exhibited a tendency for over-exploration across the entire domain, the Matérn kernel favored the central region, albeit largely neglecting the domain boundaries. Different from the two references, our kernel with its awareness of domain boundaries maintained a reasonable ratio of exploration and exploitation. \update{In Figure~\ref{fig:h_convergence}, we illustrate the convergence of the smoothing parameters $h_1$ and $h_2$. By maximizing the marginal likelihood, these parameters were automatically learned, converging to distinct values.}

In Figure~\ref{fig:comp_levy}, we investigated the convergences of the Beta and Matérn kernels on $d$-dimensional Levy test function with $d \in \{4, 8\}$. To demonstrate the awareness of boundary regions, we computed the $L^{\infty}$-based normalized distance to the boundary, formulated as $\Delta_{boundary} = 1 - 2\|\Tilde{\bx} - \mathbf{c} \|_{\infty}$, where $\Tilde{\bx}=\left (\frac{x_1 - m_1}{M_1 - m_1}, \dots, \frac{x_d - m_d}{M_d - m_d} \right)$ is a normalized version of $\bx$, $m_i$ and $M_i$ represent the lower and upper bounds of the $i$-th dimension, and $\mathbf{c} = (\frac{1}{2}, \cdots, \frac{1}{2})$ is the center of the unit hypercube. Since the optimal solution of the Levy function lies at $(1, 1, \dots, 1)^\top$, near the domain center, Matérn demonstrated an advantage in the early iterations. The changes of $\Delta_{boundary}$ during the training 
shown in Figure~\ref{fig:dist2boundary} quantitatively indicate the awareness of the boundary regions of our kernel, which is consistent with the observation in Figure~\ref{fig:levy_2d} and Table~\ref{tab:merged_settings}. 
The higher dimensional the domain was, the more iterations were needed to explore the domain boundaries. After exploring the hypercube's boundaries the Beta kernel regained its performance in the later stages. 

\paragraph{Impact of Optima Location on GP Performance.}
\label{sc:unit_hypercube_ex}
We considered the unit hypercube with $d=20$. Let $\varepsilon$ be a margin, we could split the unit hypercube into three partitions: (i) central volume, denoted by $V_c$ with $|V_c|=(1 - 2\varepsilon)^d$, (ii) $\varepsilon$-size $2^d$ sub-hypercubes near vertices, represented by $V_v$ with $|V_v|=(2\varepsilon)^d$, (iii) the remaining volumes near the faces, denoted by $V_f$ with $|V_f| = 1 - (1-2\varepsilon)^d - (2\varepsilon)^d$. By fixing $\varepsilon=0.05$, for instance, then we had $|V_c|\approx 0.1216$, $|V_v|=10^{-20}$, and $|V_f|\approx0.8784$. Originally, the synthetic test functions -- Griewank, Ackley, repeated Branin, repeated Hartmann6, and Levy -- have optimal solutions near the center of the unit hypercube (setting 1). To validate the impact of optima location on the optimization, we cropped the domains of those functions to enforce one of the optima close to a face (setting 2) or a vertex (setting 3) of the unit hypercube. Specifically, we shifted $x_1^{\text{min}}$ such that $x_1^* - x_1^{\text{min}} = \varepsilon (x_1^{\text{max}} - x_1^{\text{min}})x^*$ for setting 2. For setting 3, we ensured that $x_i^* - x_i^{\text{min}} = \varepsilon (x_i^{\text{max}} - x_i^{\text{min}}), \forall i \in [d] $, where $x^*$ is the first optimum.

We present the results of the three settings in Table~\ref{tab:merged_settings}. Of all settings, the Matérn kernel was the most competitive baseline. In setting 1, Matérn achieved the highest objective values in 4 out of 5 test functions, while our kernel was the second-best in most of the cases. The proposed kernel notably showed its strength in settings 2 and 3 by outperforming all the baselines in 9 out of 10 test functions.

\begin{figure}[t]
    \centering
    \croppdf{figures/objective_comparison_tasks_GLUE}        
    \includegraphics[width=\linewidth]{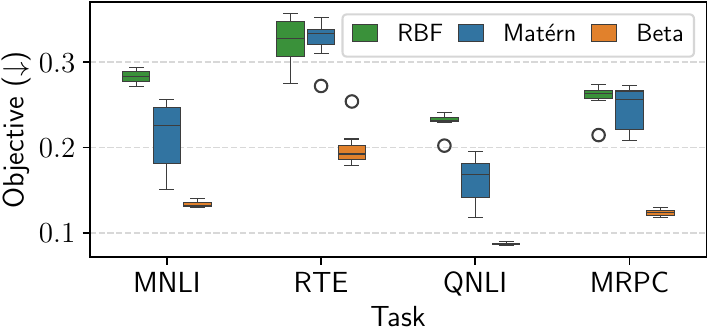}
    \caption{\small DeBERTa-v3 compression comparison between GP using the RBF kernel, the Matérn kernel ($\nu=2.5$), and our Beta kernel on different tasks in the GLUE benchmark ($d=14$). The common acquisition function is UCB.}
    \label{fig:compression_comp_glue_ucb}
\end{figure}

\subsection{Vision and Language Model Compression}
\paragraph{Compression Task Description.} 
We utilized the LoSparse method~\citep{li2023losparse}, which was originally designed to compress linear layers in transformer-based language models using low-rank and sparse approximation. A linear layer is expressed as $\mathbf{Y}=\mathbf{X}\mathbf{W}$ where $\mathbf{W} \in \mathbb{R}^{d_1 \times d_2}$ is learnable parameters, $\mathbf{X} \in \mathbb{R}^{1 \times d_1}$ is input and output features, and $\mathbf{Y} \in \mathbb{R}^{1 \times d_2}$ is the output. LoSparse performs both low-rank decomposition and sparse approximation on $\mathbf{W}$, formulated as $\mathbf{W}=\mathbf{U}\mathbf{V} + \mathbf{S}$ with $\mathbf{U} \in \mathbb{R}^{d_1 \times r}$, $\mathbf{V} \in \mathbb{R}^{r \times d_2}$, and $\mathbf{S} \in \mathbb{R}^{d_1 \times d_2}$. In~\citep{li2023losparse}, the low ranks $r$ of all linear layers were identical and independent of layer indices. In our work, we utilized BO with GP to search for the global optimal set of ranks for all linear layers. Precisely, the model compression is formulated as a multiple objective optimization as follows
\begin{align}
    \bx^{*} = \argmin_{\bx \in [0,1]^d} \left [ w \cdot \mathcal{R}(\bx) + \mathcal{L}(\bx) \right ],
    \label{eq:objective}
\end{align}
where $\bx$ represents the $d$-dimensional vector of low ranks, $\mathcal{R}(\cdot)$ is the compression rate compared to the original model, $\mathcal{L}(\cdot)$ is the error rate of the compressed model, and $w$ is a positive coefficient. 

In practice, we constrained $x_i \in [0.05, 0.95], \forall i \in [d]$, and set $w=1$. We initially sampled $5$ data points using the Sobol's algorithm~\citep{sobol1967distribution,owen1998scrambling}, and performed the optimization in $30$ iterations. After obtaining each compressed version of a model, we fine-tuned it for a varying number of iterations based on the evaluation cost (see Table~\ref{tab:model_training}). 

\begin{figure}[t]
    \centering
    \croppdf{figures/objective_comparison_acqs_GLUE}
    \includegraphics[width=\linewidth]{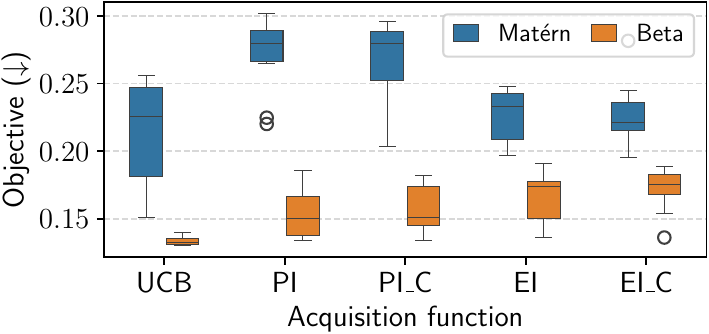}
    \caption{\small DeBERTa-v3 compression ($d=14$) comparison between GP using the Matérn kernel ($\nu=2.5$) and our Beta kernel on the MNLI task of GLUE.}
    \label{fig:compression_comp_mnli_glue}
\end{figure}

\paragraph{ViT, GPT-2, and BERT \update{Compression}.} 
For this set of experiments, we employed UCB as the common acquisition function. In Figure~\ref{fig:compression_comp_imagenet_squad_ucb} and Table~\ref{tab:kernel_comparison}, we present the compression results on the ViT, GPT-2, and BERT models. Overall, Matérn was the most competitive baseline across the tasks (see Table~\ref{tab:kernel_comparison}). In Figure~\ref{fig:compression_comp_imagenet_squad_ucb}, while the difference between RBF and Matérn was insignificant, the Beta kernel enabled BO with GP to achieve substantially better compression objectives compared to the two well-known baselines across the three settings. Specifically, when compressing ViT on ImageNet, utilizing the Beta kernel yielded an objective of $0.478\pm0.006$, which was $0.193$ and $0.172$ better than RBF and Matérn, respectively. For the GPT-2 compression task, our Beta kernel achieved an objective of $0.401\pm0.008$, outperforming RBF and Matérn with margins of $0.170$ and $0.181$, respectively. For BERT compression, our method reached an objective of $0.319\pm0.007$, substantially surpassing RBF and Matérn by $0.147$ and $0.142$, respectively.

\begin{figure*}[t]
    \centering    
    \croppdf{figures/density_RBF_ImageNet_3d}
    \croppdf{figures/density_matern_ImageNet_3d}
    \croppdf{figures/density_BUC_ImageNet_3d}
    \croppdf{figures/density_RBF_squad_3d}
    \croppdf{figures/density_matern_squad_3d}
    \croppdf{figures/density_BUC_squad_3d}
    \croppdf{figures/density_RBF_GLUE_3d}
    \croppdf{figures/density_matern_GLUE_3d}
    \croppdf{figures/density_BUC_GLUE_3d}
    \begin{minipage}{0.60\linewidth}
    \subfloat[\small RBF]{
    \includegraphics[width=0.33\linewidth]{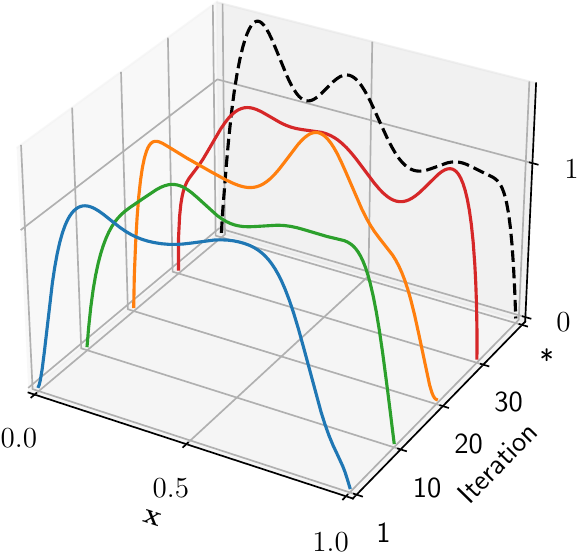}} 
    \subfloat[\small Matérn]{
    \includegraphics[width=0.33\linewidth]{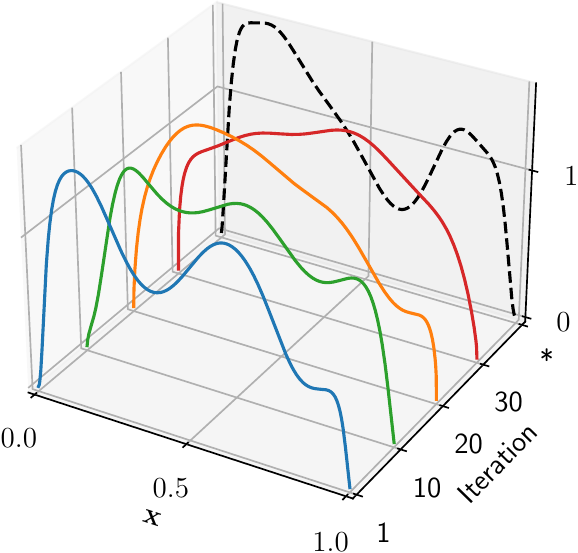}} 
    \subfloat[\small Beta]{
    \includegraphics[width=0.33\linewidth]{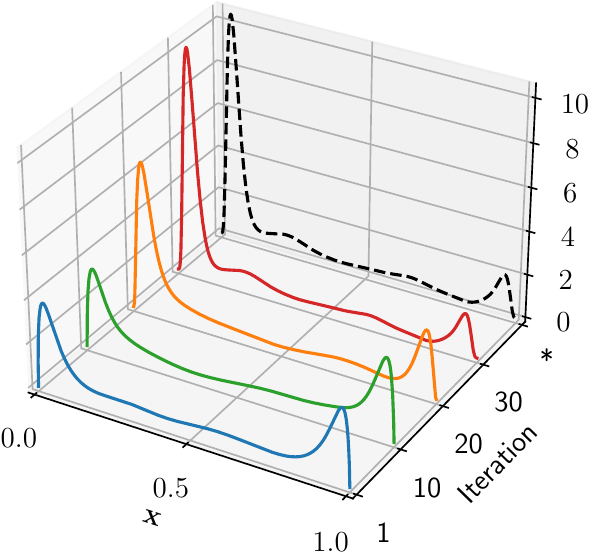}} \\
    \subfloat[\small RBF]{
    \includegraphics[width=0.33\linewidth]{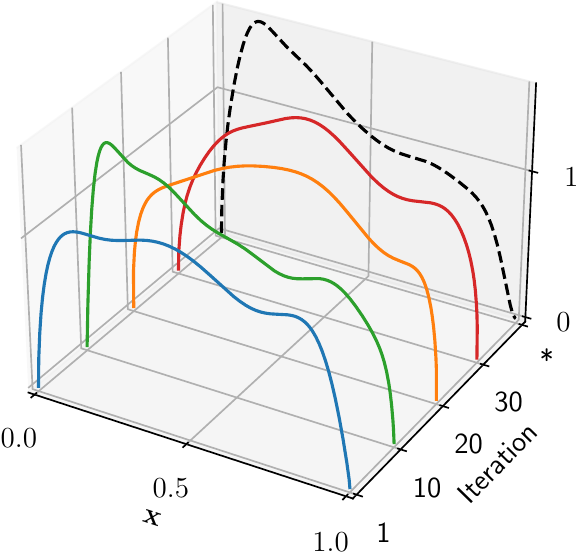}} 
    \subfloat[\small Matérn]{
    \includegraphics[width=0.33\linewidth]{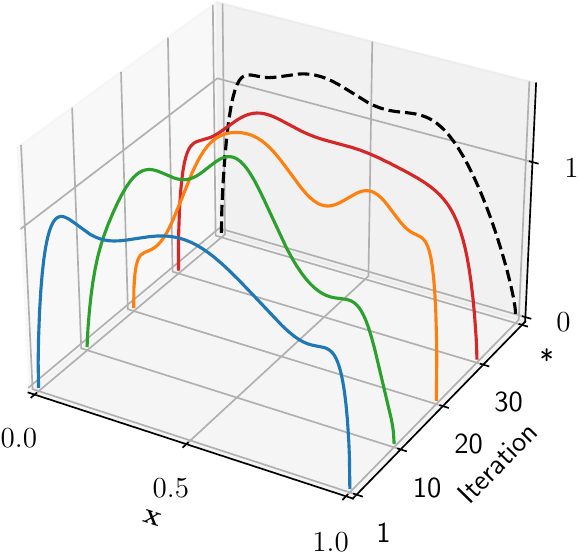}}     
    \subfloat[\small Beta]{
    \includegraphics[width=0.33\linewidth]{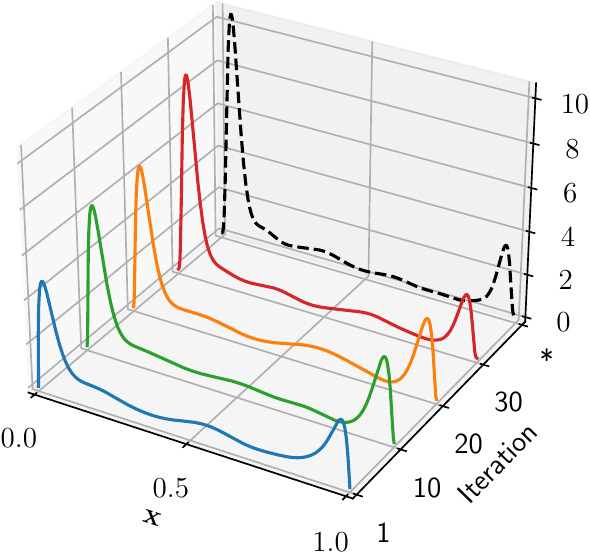}}      
    \end{minipage}%
    \hfill
    \begin{minipage}{0.38\textwidth}
        \subfloat[\small ViT\label{fig:viz_compression_layers_vit}]{
    \includegraphics[width=0.48\linewidth]{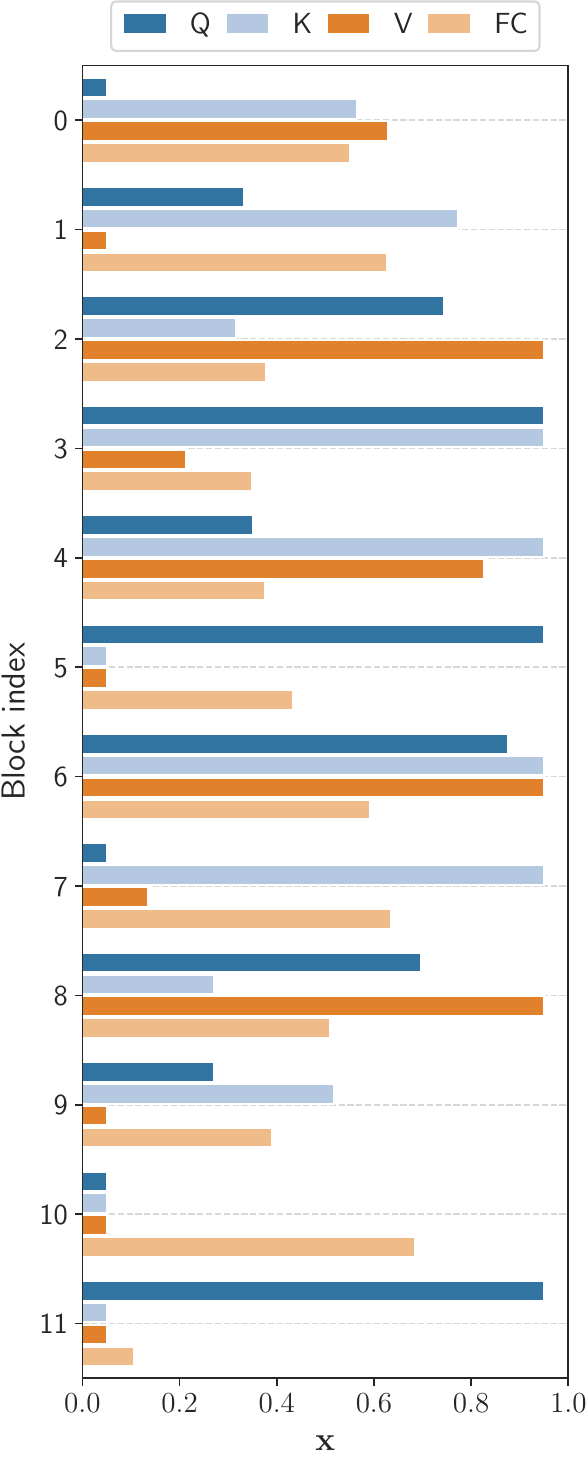}} \hfill
    \subfloat[\small BERT\label{fig:viz_compression_layers_bert}]{
    \includegraphics[width=0.48\linewidth]{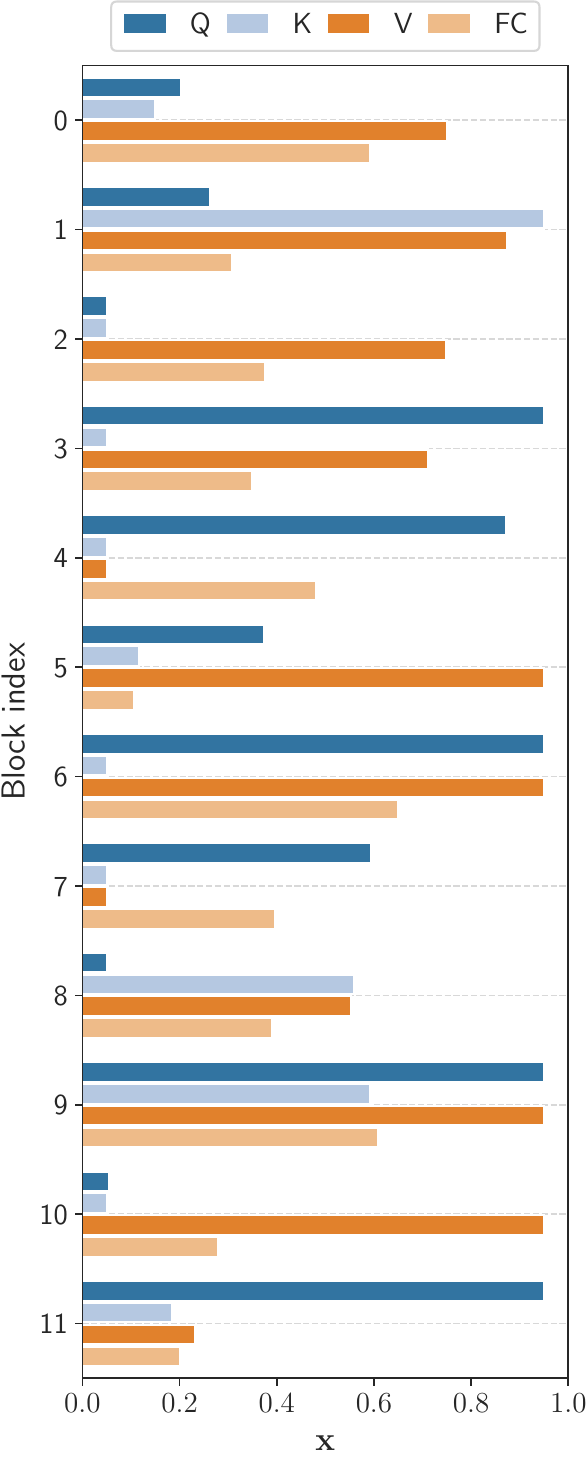}}
    \end{minipage}    
    \caption{\small (a-f) Convergence comparison between different kernels. The rows (a-c) and (d-f) correspond to the ViT, and BERT compression tasks, respectively. The blue, green, orange, and red curves indicate the density estimates at iterations 1, 10, 20, and 30, respectively. $*$ represents $\arg \min_{t \in [T]} f(\bm{x}_t)$ with $T=30$, and the dashed curves are the corresponding density estimates. (g-h) Detailed compression results using the Beta kernel. Q, K, V, and FC indicate ``query'', ``key'', ``value'', and fully connected layers, respectively.}
    \label{fig:compression_x_iter_viz}
\end{figure*}

In Table~\ref{tab:quantitative_results}, we further analyzed individual metrics in the optimization objective. In both tasks, the advantage of the Beta kernel primarily came from the compression rate improvement. Compared to Matérn, our kernel maintained insignificant trade-offs of performance for the gains of $17.3\%$ and $14.09\%$ in compression rate on ViT and BERT, respectively. Our compressed models were $14.79$ and $43.68$ times more computationally efficient than the original ViT and BERT, respectively.

\paragraph{DeBERTa-v3 Compression.} The results of compressing DeBERTa-v3 on the four tasks from the GLUE benchmark are presented in Figure~\ref{fig:compression_comp_glue_ucb} and part of Table~\ref{tab:kernel_comparison}. On the MNLI dataset, the Matérn kernel was the strongest baseline competitor. The proposed Beta kernel achieved objective scores of $0.124\pm0.001$, $0.198\pm0.007$, $0.087\pm0.000$, and $0.133\pm0.001$ on MNLI, RTE, QNLI, and MRPC, respectively, which consistently outperformed both RBF and Matérn kernels. The objective gaps between our kernel and the Matérn kernel were $0.08$, $0.127$, $0.075$, and $0.123$ on MNLI, RTE, QNLI, and MRPC, respectively.

We further examined the combinations of the Matérn and Beta kernels against various acquisition functions in the MNLI task (see Figure~\ref{fig:compression_comp_mnli_glue}). Overall, the Beta kernel achieved substantially better objective scores than the baseline across all five acquisition functions. While UCB and corrected EI were the most compatible acquisition functions for the Matérn kernel, our kernel demonstrated a clear advantage when paired with UCB. Specifically, the Beta kernel with UCB obtained an objective score of $0.133$, outperforming its combinations with PI, corrected PI, EI, and corrected PI by $0.021$, $0.023$, $0.033$, and $0.038$, respectively. When using UCB, the Beta kernel outperformed the Matérn kernel by $0.08$.

\paragraph{Learned Compression Strategy Analysis.} 

In Figure~\ref{fig:compression_x_iter_viz}, we graphically demonstrate the kernel density estimates (KDEs) of $\{x_i\}_{i \in [d]}$ during training. To properly address the boundary issue of KDE on these bounded domains, we employed the Beta KDE~\citep{chen1999beta}. On both the vision and language models, we observed that the two reference kernels tended to moderately compress all linear layers. As a result, the mean of each estimated density was centered around $0.5$. In contrast, the Beta kernel selected a small set of relevant layers, applying slight compression to them while strongly suppressing the irrelevant ones (see Figures~\ref{fig:viz_compression_layers_vit} and \ref{fig:viz_compression_layers_bert}). 
Given that our kernel's compression outperformed the two baseline kernels in Table~\ref{tab:quantitative_results}, it implies that the optimal solution to model compression may lie near the boundaries of the unit hypercube, which played to the strengths of the Beta kernel.

\section{RELATED WORK}

\paragraph{RBF and Matérn Kernels.} Various studies have been conducted to derive the regret bounds of the RBF and Matérn kernels~\citep{srinivas2009gaussian,scarlett2017lower,scarlett2018tight,belkin2018approximation,santin2016approximation,cai2021lower,vakili2021information}. The key distinction between the two kernels lies in their rates of eigenvalue decay (termed eigendecay). The RBF kernel exhibits an exponential decay of eigenvalues, while the Matérn kernel's eigenvalues decay at a polynomial rate~\citep{vakili2021information} (see Figure~\ref{fig:eigenvalue_decay}). In this work, we numerically show that the proposed Beta kernel's eigendecay rate is exponential, which is similar to RBF's. 

\paragraph{Non-stationary Kernels.} \cite{higdon1999non} introduced a spatially evolving family of smoothing kernels on $\mathbb{R}^2$. \cite{paciorek2003nonstationary} extended the Matérn kernel into a non-stationary version on $\mathbb{R}^d$. \cite{remes2017non} proposed the non-stationary generalized spectral mixture kernel with input-dependent GP frequency surfaces. A common characteristic of these non-stationary kernels is that they are all defined on unbounded domains. In contrast, our Beta kernel is constructed from products of Beta distributions, which allows it to naturally capture a wide variety of smooth functions within bounded unit hypercubes. Such a formulation gives the Beta kernel a distinct advantage in being more sensitive to boundary regions.

\section{CONCLUSION}
We present a novel non-stationary kernel constructed from products of Beta probability density functions, whose close form is a product of Gamma functions. 
We provide empirical evidence indicating that the proposed kernel's eigendecay rate is exponential. Such an eigendecay rate is similar to RBF's, which was proved to have sub-linear regret bound by \cite{vakili2021information}. \update{However, a primary limitation of our study is the absence of a formal regret bound for the proposed kernel. Future work should prioritize deriving these theoretical guarantees to deepen our understanding of the algorithm’s performance and enhance its robustness.} Our experiments indicate that the Beta kernel is robust in modeling functions with optima near faces or vertices of the unit hypercube. We show that our kernel substantially outperforms the two well-known kernels -- RBF and Matern -- on synthetic data as well as the model compression tasks. \update{Our codebase is made publically available at~\url{https://github.com/imedslab/BetaKernel}}.

\begin{highlight}
\section*{Acknowledgments}
The authors wish to acknowledge CSC—IT Center for Science, Finland, for generous computational resources. H.Z. and M.B.B. acknowledge support  from the Flemish Government (AI Research Program) and the Research
Foundation - Flanders (FWO) through project number
G0G2921N. H.Z. is supported by the China Scholarship
Council. A.T. and H.H.N. were supported by the Research Council of Finland (Profi6 336449 funding program), Sigrid Juselius foundation, University of Oulu strategic funds, and the European Union Horizon Programs (STAGE project, decision 101137146, and UltraSense project, Grant Agreement number 101130192). Finally, we gratefully acknowledge Prof. Simo Särkkä for insightful discussions on regret bounds and probability product kernels.
\end{highlight}

\bibliography{ref}

\clearpage

\onecolumn

\appendix

\renewcommand{\thepage}{S\arabic{page}} 
\renewcommand{\thetable}{S\arabic{table}}  
\renewcommand{\thefigure}{S\arabic{figure}}     

\setcounter{page}{1}
\setcounter{figure}{0}
\setcounter{table}{0}
\setcounter{section}{0}
\setcounter{equation}{0}
\setcounter{lemma}{0}

\title{Bayesian Optimization over Bounded Domains with the Beta Product Kernel\\(Supplementary Material)}

\maketitle

\begin{table}[h]
    \centering
    \caption{Training details of different models.}
    \begin{tabular}{|l|c|c|c|c|}
        \hline
        \textbf{Model} & \textbf{Training Steps} & \textbf{Batch Size} & \textbf{Time per Evaluation} & \textbf{Num of Training Samples} \\
        \hline
        GPT-2 & 512 & 25 & 55 mins & 12,800 \\
        BERT & 256 & 100 & 40 mins & 25,600 \\
        DeBERTa-v3 & 256 & 100 & 30 mins & 25,600 \\
        ViT & 1 epoch & 200 & 7 mins & Full training set \\
        \hline
    \end{tabular}    
    \label{tab:model_training}
\end{table}

\section{Proof of upper bound of $K_{\beta}(\bx, \bx)$}
\label{sec:proof_prop1}
\subsection{Proof of Lemma 1}
\label{sec:proof_lemma1}
\begin{lemma} 
\label{lemma1}
For gamma function $\Gamma(\cdot)$, we have that
\begin{equation} \footnotesize
    \frac{\Gamma(2x+1)}{\Gamma^2(x+1)} = \frac{2^{2x} \Gamma(x+\frac{1}{2})}{\sqrt{\pi} \Gamma(x+1)} 
\end{equation}  
\end{lemma}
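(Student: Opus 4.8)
The plan is to recognize Lemma~\ref{lemma1} as a direct restatement of the Legendre duplication formula for the Gamma function, so that the entire proof reduces to a single substitution followed by an algebraic rearrangement. Concretely, I would begin from the duplication formula in the form
\begin{equation}
\Gamma(z)\,\Gamma\!\left(z+\tfrac{1}{2}\right) = 2^{1-2z}\sqrt{\pi}\,\Gamma(2z),
\end{equation}
which holds for all $z>0$ (indeed for all $z$ where the Gamma functions are defined). The key observation is that the argument $2x+1$ appearing on the left-hand side of the lemma factors as $2x+1 = 2\!\left(x+\tfrac{1}{2}\right)$, which tells us exactly what to substitute.

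Next I would set $z = x+\tfrac{1}{2}$, so that $z+\tfrac{1}{2}=x+1$, $2z=2x+1$, and $1-2z=-2x$. Substituting these into the duplication formula yields
\begin{equation}
\Gamma\!\left(x+\tfrac{1}{2}\right)\Gamma(x+1) = 2^{-2x}\sqrt{\pi}\,\Gamma(2x+1),
\end{equation}
which I would rearrange to isolate $\Gamma(2x+1)$:
\begin{equation}
\Gamma(2x+1) = \frac{2^{2x}}{\sqrt{\pi}}\,\Gamma\!\left(x+\tfrac{1}{2}\right)\Gamma(x+1).
\end{equation}
Dividing both sides by $\Gamma^2(x+1)$ and cancelling one factor of $\Gamma(x+1)$ then gives
\begin{equation}
\frac{\Gamma(2x+1)}{\Gamma^2(x+1)} = \frac{2^{2x}\,\Gamma\!\left(x+\tfrac{1}{2}\right)}{\sqrt{\pi}\,\Gamma(x+1)},
\end{equation}
which is precisely the claimed identity, completing the argument.

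Since the result is an essentially mechanical consequence of a standard special-function identity, I do not expect a genuine obstacle: the only real ``difficulty'' is spotting that $2x+1$ should be read as $2(x+\tfrac12)$ so that the duplication formula applies cleanly. If one instead wanted a fully self-contained derivation that does not cite the duplication formula as a black box, the main work would shift to proving that formula itself, for which the cleanest route is to evaluate the Beta integral $B\!\left(x+\tfrac12,x+\tfrac12\right)$ in two ways (directly via $B(\alpha,\beta)=\Gamma(\alpha)\Gamma(\beta)/\Gamma(\alpha+\beta)$ as already recorded in Eq.~\eqref{eq:beta_func}, and via the substitution $s=\tfrac{1}{2}(1+u)$ which introduces the factor $2^{1-2x}$); that Beta-function computation would then be the harder step. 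For the purposes of this lemma, however, invoking the duplication formula directly is both sufficient and standard.
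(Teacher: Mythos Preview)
Your proposal is correct and follows essentially the same route as the paper's own proof: both invoke the Legendre duplication formula $\Gamma(z)\Gamma(z+\tfrac12)=2^{1-2z}\sqrt{\pi}\,\Gamma(2z)$, substitute $z=x+\tfrac12$, and then divide through by $\Gamma^2(x+1)$ to obtain the stated identity. Your additional remark about deriving the duplication formula via the Beta integral is a nice aside but not needed, as the paper likewise treats the formula as a known identity.
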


\begin{proof}
    We start by utilizing the duplication formula for the Gamma function, which states:
\[
\Gamma(z) \Gamma\left(z + \frac{1}{2}\right) = 2^{1-2z} \sqrt{\pi} \Gamma(2z)
\]
Applying the duplication formula to our specific case by setting \( z = x + \frac{1}{2} \), we get:
\[
\Gamma\left(x + \frac{1}{2}\right) \Gamma(x+1) = 2^{-2x} \sqrt{\pi} \Gamma(2x+1).
\]
Dividing both sides by \(\Gamma^2(x+1)\), we obtain:
\[
\frac{\Gamma(2x+1)}{\Gamma^2(x+1)} = \frac{2^{2x} \Gamma\left(x + \frac{1}{2}\right)}{\sqrt{\pi} \Gamma(x+1)}.
\]
which completes the proof.
\end{proof}

\subsection{Proof of Lemma 2}
\label{sec:proof_lemma2}
\begin{lemma} \label{prop:gramma2_revised}
For $x \geq 0 $ and $0 < s < 1$, it holds that:
\[
\left (\frac{2}{2x + 1} \right )^{\frac{1}{2}} \leq \frac{\Gamma(x+\frac{1}{2})}{\Gamma(x+1)} \leq  2.
\]
\begin{proof}
    The Wendel's inequality is stated as 
\begin{align}
    \left ( \frac{z}{z+s} \right )^{1 - s} \leq \frac{1}{z^s} \cdot \frac{\Gamma(z+s)}{\Gamma(z)} \leq 1,
\end{align}
where $z > 0$ and $s \in (0,1)$. When $s=\frac{1}{2}$, it is equivalent to 
\begin{align}
1 &\leq z^{\frac{1}{2}} \frac{\Gamma(z)}{\Gamma(z+ \frac{1}{2})} \leq  \left ( \frac{z + \frac{1}{2}}{z}\right )^{\frac{1}{2}} \\
    z^{-\frac{1}{2}} &\leq \frac{\Gamma(z)}{\Gamma(z+ \frac{1}{2})} \leq z^{- \frac{1}{2}} \left ( \frac{1}{2z} + 1 \right )^{\frac{1}{2}}.
\end{align}
Apply the inequality with $z=x+\frac{1}{2}$ and $s=\frac{1}{2}$, we have that
\begin{align}
    \left (\frac{2}{2x + 1} \right )^{\frac{1}{2}} \leq \frac{\Gamma\left(x + \frac{1}{2}\right)}{\Gamma\left(x + 1\right)} &\leq \left(x + \frac{1}{2}\right)^{-\frac{1}{2}} \left( \frac{1}{2x + 1} + 1 \right)^{\frac{1}{2}} \\
    &= \left [ \frac{2}{2x + 1} \left ( \frac{1}{2x+1} + 1\right ) \right]^{\frac{1}{2}} =: g(x).
\end{align}
As $g(x)$ is a decreasing function on $[0, \infty)$, thus $g(0) = \max_{x\geq0}g(x) = 2$, which concludes the proof.
\end{proof}
\end{lemma}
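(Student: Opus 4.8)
The plan is to obtain both bounds as a direct specialization of \emph{Wendel's inequality}, which sandwiches a ratio of two Gamma functions whose arguments differ by a fractional amount $s \in (0,1)$. Since the quantity of interest, $\Gamma(x+\tfrac{1}{2})/\Gamma(x+1)$, has arguments differing by exactly $\tfrac{1}{2}$, the relevant choice is $s=\tfrac{1}{2}$ (this is the only role played by the parameter $s$ appearing in the hypothesis). Concretely, I would start from Wendel's inequality at $s=\tfrac12$, namely for $z>0$,
\begin{align}
    \left(\frac{z}{z+\tfrac12}\right)^{1/2} \;\leq\; z^{-1/2}\,\frac{\Gamma(z+\tfrac12)}{\Gamma(z)} \;\leq\; 1, \nonumber
\end{align}
and then take reciprocals of all three positive quantities (which reverses the inequalities) and divide by $z^{1/2}$ to isolate the opposite ratio:
\begin{align}
    z^{-1/2} \;\leq\; \frac{\Gamma(z)}{\Gamma(z+\tfrac12)} \;\leq\; z^{-1/2}\left(1+\frac{1}{2z}\right)^{1/2}. \nonumber
\end{align}

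Next I would substitute $z = x+\tfrac12$, so that $\Gamma(z)/\Gamma(z+\tfrac12)$ becomes exactly the target ratio $\Gamma(x+\tfrac12)/\Gamma(x+1)$, while $z^{-1/2}=(x+\tfrac12)^{-1/2}=\bigl(\tfrac{2}{2x+1}\bigr)^{1/2}$. The lower bound of the lemma then drops out immediately as the left-hand side, so no further work is needed there. The remaining task is to show that the $x$-dependent upper bound
\begin{align}
    g(x) \;=\; \left[\frac{2}{2x+1}\left(\frac{1}{2x+1}+1\right)\right]^{1/2} \nonumber
\end{align}
is dominated by the constant $2$ on $[0,\infty)$.

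The main—though still routine—obstacle is this final monotonicity step, which I would dispatch with a change of variable $u=2x+1\ge 1$. Then $g(x)^2 = \tfrac{2}{u^2}+\tfrac{2}{u}$, whose derivative in $u$ is $-\tfrac{4}{u^3}-\tfrac{2}{u^2}<0$; hence $g^2$, and therefore $g$, is strictly decreasing in $x$. Consequently $g$ attains its maximum at $x=0$, where $g(0)^2 = 2\cdot 2 = 4$, i.e. $g(0)=2$. This gives $\Gamma(x+\tfrac12)/\Gamma(x+1)\le g(x)\le g(0)=2$ for all $x\ge 0$, completing the argument. The only subtleties worth double-checking are the orientation of the inequalities when taking reciprocals and the simplification $z^{-1/2}=\bigl(2/(2x+1)\bigr)^{1/2}$; everything else is a mechanical substitution into a standard inequality.
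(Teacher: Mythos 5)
Your proposal is correct and follows essentially the same route as the paper's proof: Wendel's inequality at $s=\tfrac12$, reciprocation, the substitution $z=x+\tfrac12$, and bounding the resulting $g(x)$ by its value at $x=0$. The only difference is cosmetic---you verify the monotonicity of $g$ explicitly via $u=2x+1$, a step the paper merely asserts---so the two arguments coincide.
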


\subsection{Proof of Proposition 1}
\begin{proposition}    
    $K_{\beta}(\bx, \bx) = \mathcal{O}(2^{2d - \frac{2d}{h}} h^{-\frac{3d}{2}}), \forall \bx \in [0,1]^d$.
\end{proposition}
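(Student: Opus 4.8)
The plan is to derive the explicit finite-$h$ bound of Proposition~\ref{eq:k_bound} — which is precisely what Lemmas~\ref{lemma1} and~\ref{prop:gramma2_revised} are engineered to produce — and then read off its leading order as $h\to 0$. I would start from the closed form of the diagonal, $K_{\beta}(\bx,\bx)=\tilde{C}\prod_{i=1}^d \frac{\Gamma(2x_i/h+1)\Gamma(2(1-x_i)/h+1)}{\Gamma^2(x_i/h+1)\Gamma^2((1-x_i)/h+1)}$, and bound the normalizing constant $\tilde{C}$ and the product separately, since each individually carries an exponential factor of the form $2^{\pm 2d/h}$ that must be tracked carefully.

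For the product term I would set $a_i=x_i/h$ and $c_i=(1-x_i)/h$, so that each coordinate contributes $\frac{\Gamma(2a_i+1)}{\Gamma^2(a_i+1)}\cdot\frac{\Gamma(2c_i+1)}{\Gamma^2(c_i+1)}$. Applying Lemma~\ref{lemma1} to both factors rewrites this as $\frac{2^{2(a_i+c_i)}}{\pi}\cdot\frac{\Gamma(a_i+\frac12)}{\Gamma(a_i+1)}\cdot\frac{\Gamma(c_i+\frac12)}{\Gamma(c_i+1)}$, and the crucial cancellation $a_i+c_i=1/h$ collapses the powers of two to a clean $2^{2/h}$ per coordinate, uniformly in $\bx$. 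The two residual $\Gamma$-ratios are then controlled by the Wendel estimate of Lemma~\ref{prop:gramma2_revised}, where the crude upper bound $\le 2$ already suffices, giving a bound of order $(2^{2/h+2}/\pi)^d$ for $\prod_i(\cdot)$.

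For $\tilde{C}=\bigl(\Gamma^2(1/h+2)/\Gamma(2/h+2)\bigr)^d$ I would write $m=1/h$ and apply Lemma~\ref{lemma1} at argument $m+1$, combined with the recurrence $\Gamma(2m+3)=(2m+2)\Gamma(2m+2)$, to express $\Gamma^2(m+2)/\Gamma(2m+2)$ as $(2m+2)\sqrt{\pi}\,\Gamma(m+2)\big/\bigl(2^{2m+2}\Gamma(m+\tfrac32)\bigr)$. The leftover ratio $\Gamma(m+2)/\Gamma(m+\frac32)$ is then bounded by the Wendel inequality appearing in the proof of Lemma~\ref{prop:gramma2_revised}, taken at $z=m+\frac32$, which yields $(m+\tfrac32)^{1/2}$. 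This gives an explicit bound on $\tilde{C}$ carrying the compensating factor $2^{-2d/h}$ together with polynomial terms $(1/h+1)^d$ and $(1/h+\tfrac32)^{d/2}$.

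Multiplying the two pieces recovers the explicit bound, and taking $h\to 0$ finishes the argument: $(1/h+1)^d\sim h^{-d}$ and $(1/(h\pi)+3/(2\pi))^{d/2}\sim h^{-d/2}$ combine to $h^{-3d/2}$, with the remaining constants assembling into the $2^{(\cdot)d}$ prefactor of the stated $\mathcal{O}(2^{2d-2d/h}h^{-3d/2})$. I expect the main obstacle to be exactly the bookkeeping of the exponential factors $2^{+2d/h}$ (from the product) and $2^{-2d/h}$ (from $\tilde{C}$): one must settle unambiguously which survives in the final expression and check that the two do not silently cancel, because a dropped or mis-signed $2^{2/h}$ alters the bound by an exponential amount and can turn a valid upper bound into an invalid one. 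Verifying this interplay — rather than any individual $\Gamma$-ratio estimate — is the delicate step of the proof.
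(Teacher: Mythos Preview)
Your strategy is identical to the paper's: apply the duplication identity (Lemma~\ref{lemma1}) to each coordinate factor and to $\tilde{C}$, then bound the residual ratios $\Gamma(z+\tfrac12)/\Gamma(z+1)$ via the Wendel estimate (Lemma~\ref{prop:gramma2_revised}). Your intermediate computations are in fact more careful than the paper's: you correctly obtain $2^{2(a_i+c_i)}=2^{2/h}$ per coordinate from the product, whereas the paper's proof writes the exponent as $2\tfrac{x_i}{h}+2\bigl(1-\tfrac{x_i}{h}\bigr)=2$ (it should be $2\tfrac{x_i}{h}+2\tfrac{1-x_i}{h}=\tfrac{2}{h}$) and hence records $4$ instead of $2^{2/h}$.

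The cancellation you flag as ``the delicate step'' is real, and your instinct is correct: the $2^{+2d/h}$ from the product and the $2^{-2d/h}$ from $\tilde{C}$ annihilate exactly. Consequently your plan does \emph{not} ``recover the explicit bound'' of Proposition~\ref{eq:k_bound}; carrying it through yields
\[
K_\beta(\bx,\bx)\;\le\;2^{\,d}\Bigl(\tfrac{1}{h}+1\Bigr)^{d}\Bigl(\tfrac{1}{h\pi}+\tfrac{3}{2\pi}\Bigr)^{d/2}=\mathcal{O}\bigl(h^{-3d/2}\bigr),
\]
with no surviving $2^{-2d/h}$. The paper's stated bound carries that extra factor only because of the algebraic slip above, and for small $h$ it is not a valid upper bound: e.g.\ at $d=1$, $x=\tfrac12$, $h=0.1$ one computes $K_\beta(x,x)\approx 1.98$, while the paper's bound evaluates to roughly $1.6\times 10^{-4}$. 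So your outline is sound and matches the intended argument, but following it faithfully will prove $\mathcal{O}(h^{-3d/2})$ rather than the stated $\mathcal{O}\bigl(2^{2d-2d/h}h^{-3d/2}\bigr)$; the latter, as written, is false.
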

\begin{proof}
For any $\bx=(x_1, \dots, x_d) \in [0,1]^d$, $K_{\beta}(\bx, \bx)$ is expressed as
\begin{align}
   K_{\beta}(\bx, \bx) = C \prod_{i=1}^d \frac{\Gamma(2\frac{x_i}{h_i}+1) \Gamma(2\frac{1-x_i}{h_i}+1)}{\Gamma^2(\frac{x_i}{h_i}+1)\Gamma^2(\frac{1-x_i}{h_i} + 1)}, \nonumber
\end{align}
where
\begin{align}
    C = \frac{\Gamma^2(\frac{1}{h_i}+2)}{\Gamma(\frac{2}{h_i}+2)}.
\end{align}

By utilizing Lemma~\ref{lemma1} with $\frac{x_i}{h_i}$ and $\frac{1-x_i}{h_i}$, we have that
\begin{align}
    K_{\beta}(\bx, \bx) &= C \prod_{i=1}^d \frac{2^{2\frac{x_i}{h_i} + 2(1 - \frac{x_i}{h_i})} \Gamma(\frac{x_i}{h_i}+ \frac{1}{2}) \Gamma(\frac{1-x_i}{h_i} + \frac{1}{2})}{\pi\Gamma(\frac{x_i}{h_i}+1)\Gamma(\frac{1-x_i}{h_i} + 1)} \\
    &=C \prod_{i=1}^d \frac{4}{\pi}\frac{\Gamma(\frac{x_i}{h_i}+ \frac{1}{2}) \Gamma(\frac{1-x_i}{h_i} + \frac{1}{2})}{\Gamma(\frac{x_i}{h_i}+1)\Gamma(\frac{1-x_i}{h_i} + 1)} 
\end{align}

When applying Lemma~\ref{prop:gramma2_revised}, we have that
\begin{equation}
    \begin{aligned}
        \frac{\Gamma(\frac{x_i}{h_i}+\frac{1}{2}) \Gamma(\frac{1-x_i}{h_i}+\frac{1}{2})}{ \Gamma(\frac{x_i}{h_i}+1)\Gamma(\frac{1-x_i}{h_i} + 1)} \leq 4
    \end{aligned}
\end{equation}

Therefore, we can derive that
\begin{align}
        K_{\beta}(\bx, \bx) &\leq \frac{16^d}{\pi^d} \prod_{i=1}^d \frac{\Gamma^2(\frac{1}{h_i}+2)}{\Gamma(\frac{2}{h_i}+2)} \\
        &= \frac{16^d}{\pi^d} \prod_{i=1}^d \frac{(\frac{2}{h_i}+2) \Gamma^2(\frac{1}{h_i}+2)}{\Gamma(\frac{2}{h_i}+2+1)} && \text{ by } \Gamma(x+1)=x\Gamma(x)\\
        &= \frac{16^d}{\pi^d} \prod_{i=1}^d \frac{(\frac{2}{h_i}+2)\sqrt{\pi }\Gamma(\frac{1}{h_i}+2)}{2^{2(\frac{1}{h_i}+1)}\Gamma(\frac{1}{h_i}+\frac{3}{2})} && \text{ by Lemma}~\ref{lemma1} \text{ with } x=\frac{1}{h_i}+1\\
        & = \frac{2^{2d}}{\pi^{\frac{d}{2}}} \prod_{i=1}^d \frac{(\frac{2}{h_i}+2)\Gamma(\frac{1}{h_i}+2)}{2^{\frac{2}{h_i}}\Gamma(\frac{1}{h_i}+\frac{3}{2})} \\
        &\leq \frac{2^{3d}}{\pi^{\frac{d}{2}}} \prod_{i=1}^d \frac{(\frac{1}{h_i}+1)(\frac{1}{h_i}+ \frac{3}{2})^{\frac{1}{2}} }{2^{\frac{2}{h_i}}} && \text{ by Lemma}~\ref{prop:gramma2_revised} \ \frac{\Gamma(\frac{1}{h_i}+2)}{\Gamma(\frac{1}{h_i}+
    \frac{3}{2})} \le \left (\frac{1}{h_i}+ \frac{3}{2} \right )^{\frac{1}{2}} \\
        &=  2^{3d-\frac{2d}{h}} \left (\frac{1}{h}+1 \right )^d \left (\frac{1}{h\pi}+\frac{3}{2\pi} \right )^\frac{d}{2}     
\end{align}

Therefore, we conclude the proof.
\end{proof}
\end{document}